\pdfoutput=1
\documentclass[11pt]{article}
\usepackage{enumerate}
\usepackage[OT1]{fontenc}

\usepackage{mystyle}

\usepackage[textsize=tiny]{todonotes}

\newcommand{\RN}[1]{
  \textup{\uppercase\expandafter{\romannumeral#1}}
}

\definecolor{ultramarine}{rgb}{0.,0.1,0.9}
\usepackage[frozencache=true,cachedir=minted-cache]{minted} 

\usepackage{paralist}
\usepackage{fullpage}

\usepackage[utf8]{inputenc}
\usepackage[T1]{fontenc}
\usepackage{url}
\usepackage{booktabs}
\usepackage{amsfonts}
\usepackage{nicefrac}
\usepackage{microtype}
\usepackage{xcolor}
\usepackage{listings}
\usepackage{wrapfig}
\definecolor{darkgreen}{RGB}{1,130,32}
\let\oldthanks\thanks
\renewcommand{\thanks}[1]{\let\footnotemark\relax\oldthanks{#1}}
\usepackage{caption}
\usepackage{minted}
\usepackage{fancyvrb}
\RecustomVerbatimCommand{\VerbatimInput}{VerbatimInput}
{fontsize=\footnotesize,
 breaklines=true,
 breakanywhere=true, 
 breaksymbol=,
 frame=single,  
 framesep=0.7em,
 labelposition=topline,
}
\DeclareMathAlphabet{\mathmybb}{U}{bbold}{m}{n}
\newcommand{\1}{\mathmybb{1}}
\definecolor{darkgreen}{RGB}{1,170,32}
\definecolor{lightblue}{RGB}{1,122,190}

\usepackage{undertilde}
\theoremstyle{plain}

\title{Self-Exploring Language Models:\\ Active Preference Elicitation for Online Alignment}
\author{
Shenao Zhang\textsuperscript{1}~~~Donghan Yu\textsuperscript{2}~~~Hiteshi Sharma\textsuperscript{2}~~~Han Zhong\textsuperscript{3}~~~Zhihan Liu\textsuperscript{1}\\
Ziyi Yang\textsuperscript{2}~~~~Shuohang Wang\textsuperscript{2}~~~~Hany Hassan\textsuperscript{2}~~~~Zhaoran Wang\textsuperscript{1}
\\
\small
\textsuperscript{1}Northwestern University~~~~~~~~
\textsuperscript{2}Microsoft~~~~~~~~
\textsuperscript{3}Peking University
}

\begin{document}
\date{}
\maketitle

\begin{abstract}
Preference optimization, particularly through Reinforcement Learning from Human Feedback (RLHF), has achieved significant success in aligning Large Language Models (LLMs) to adhere to human intentions. Unlike offline alignment with a fixed dataset, online feedback collection from humans or AI on model generations typically leads to more capable reward models and better-aligned LLMs through an iterative process. However, achieving a globally accurate reward model requires systematic exploration to generate diverse responses that span the vast space of natural language. Random sampling from standard reward-maximizing LLMs alone is insufficient to fulfill this requirement. To address this issue, we propose a bilevel objective optimistically biased towards potentially high-reward responses to actively explore out-of-distribution regions. By solving the inner-level problem with the reparameterized reward function, the resulting algorithm, named \textit{Self-Exploring Language Models} (SELM), eliminates the need for a separate RM and iteratively updates the LLM with a straightforward objective. Compared to \textit{Direct Preference Optimization} (DPO), the SELM objective reduces indiscriminate favor of unseen extrapolations and enhances exploration efficiency. Our experimental results demonstrate that when fine-tuned on Zephyr-7B-SFT and Llama-3-8B-Instruct models, SELM significantly boosts the performance on instruction-following benchmarks such as MT-Bench and AlpacaEval 2.0, as well as various standard academic benchmarks in different settings. Our code and models are available at \url{https://github.com/shenao-zhang/SELM}.
\end{abstract}

\section{Introduction}
Large Language Models (LLMs) have recently achieved significant success largely due to their ability to follow instructions with human intent. As the defacto method for aligning LLMs, Reinforcement Learning from Human Feedback (RLHF) works by maximizing the reward function, either a separate model \citep{ouyang2022training,bai2022training,gao2023scaling} or reparameterized by the LLM policy \citep{rafailov2024direct,rafailov2024r,azar2023general,zhao2023slic}, which is learned from the prompt-response preference data labeled by humans. The key to the success of alignment is the response \textit{diversity} within the preference data, which prevents reward models (RMs) from getting stuck in local optima, thereby producing more capable language models.

Offline alignment methods \citep{rafailov2024direct,tang2024understanding} attempt to manually construct diverse responses for fixed prompts \citep{cui2023ultrafeedback,ivison2023camels,starling2023}, which, unfortunately, struggles to span the nearly infinite space of natural language. On the other hand, online alignment follows an \textit{iterative} procedure: sampling responses from the LLM and receiving feedback to form new preference data for RM training \citep{ouyang2022training,guo2024direct}. The former step helps explore out-of-distribution (OOD) regions through randomness in sampling. However, in standard online RLHF frameworks, maximizing the expected reward learned from the collected data is the only objective for the LLM, sampling from which often leads to responses clustered around local optima. This passive exploration mechanism can suffer from overfitting and premature convergence, leaving the potentially high-reward regions unexplored.

To address this issue, we propose an active exploration method for online alignment that elicits novel favorable responses. In its simplest form, an optimism term $\alpha \max_y r(x, y)$ is added to the reward-fitting objective (e.g., the negative log-likelihood loss $\mathcal{L}_{\text{lr}}$ on dataset $\mathcal{D}$), resulting in a bilevel optimization objective for the \textit{reward} model $r$: 
\#\label{eq_intro}
\max_r\max_y \alpha r(x, y)-\mathcal{L}_{\text{lr}}(r; \mathcal{D}),
\#
where $\alpha$ is a hyperparameter controlling the degree of optimism. The intuition is illustrated in Figure \ref{urm_illu}. Specifically, minimizing the vanilla reward-fitting loss $\mathcal{L}_{\text{lr}}$ is likely to give a locally accurate RM that overfits the observed data and gets stuck in local minima. Random sampling from this vanilla RM may take a long time to explore the OOD regions that contain the best response. By incorporating the optimism term, we obtain an RM that \textit{both} fits the data well and has a large $\max_y r(x, y)$. This ensures that the greedy response $y_u$ from it is either globally optimal when uncertainty in high-reward regions is eliminated, or potentially good in unexplored areas where $r(x, y_u)$ can be arbitrarily huge due to the relaxed reward-fitting loss. Feedback from humans on these responses $y_u$ can then reduce uncertainty and train a more accurate RM. 

\begin{figure}[H]
\centering
\includegraphics[width=0.5\textwidth]{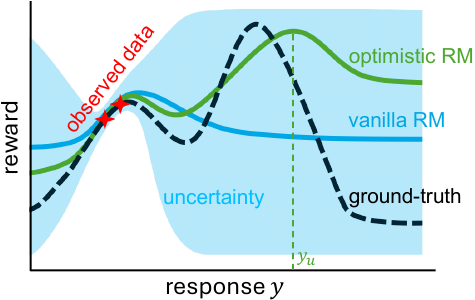}
\caption{Intuition of our method. For a fixed prompt $x$, a reward model $r(x, y)$ tries to fit the ground-truth reward $r^*(x, y)$. The \textcolor{lightblue}{blue} and \textcolor{darkgreen}{green} RMs are equally good when using standard reward-fitting loss $\mathcal{L}_{\text{lr}}$, since the observed preference data (\textcolor{red}{red} stars) are fitted equally well. However, the \textcolor{darkgreen}{green} RM has a larger $\max_y r(x, y)$ and thus a lower optimistically biased loss $\mathcal{L}_{\text{lr}} - \alpha\max_y r(x, y)$. Therefore, the response $y_u$ at which the uncertainty is high can be elicited and then proceeded for human feedback to reduce uncertainty.}
\label{urm_illu}
\end{figure}

In this paper, we formulate this idea within the context of online \textit{direct} alignment, where the LLM is iteratively updated without a separate RM. We first introduce two modifications to the bilevel RM objective in \eqref{eq_intro}, namely adding KL constraints and using relative maximum reward. Then we derive a simple LLM training objective by applying the closed-form solution of the inner-level problem and reparameterizing the reward with the LLM policy. The resulting iterative algorithm is called \textit{Self-Exploring Language Models} (SELM). We show that the policy gradient of SELM is biased towards more rewarding areas. Furthermore, by reducing the chance of generating responses that are assigned low implicit rewards, SELM mitigates the \textit{indiscriminate} favoring of unseen extrapolations in DPO \citep{rafailov2024direct,rafailov2024r} and enhances exploration efficiency. 

In experiments, we implement SELM using Zephyr-7B-SFT \citep{tunstall2023zephyr} and Llama-3-8B-Instruct \citep{llama3} as base models. By fine-tuning solely on the UltraFeedback \citep{cui2023ultrafeedback} dataset and using the small-sized PairRM \citep{jiang2023llm} for iterative AI feedback, SELM boosts the performance of Zephyr-7B-SFT and Llama-3-8B-Instruct by a large margin on AlpacaEval 2.0 \citep{dubois2024length} ($+16.24\%$ and $+11.75\%$ LC win rates) and MT-Bench \citep{zheng2024judging} ($+2.31$ and $+0.32$). SELM also demonstrates strong performance on standard academic benchmarks and achieves higher pairwise LC win rates against the very strong iterative DPO baseline, with almost no additional computational overhead under fair comparisons. 
\section{Related Work}
\vspace{-0.15cm}
\paragraph{Data Synthesis for LLMs.} A key challenge for fine-tuning language models to align with users' intentions lies in the collection of demonstrations, including both the SFT instruction-following expert data and the RLHF preference data. Gathering such data from human labelers is expensive, time-consuming, and sometimes suffers from variant quality \citep{ouyang2022training,kopf2024openassistant}. To address this issue, synthetic data \citep{liu2024best} has been used for aligning LLMs. One line of work focuses on generating plausible instruction prompts for unlabeled data by regarding the target output as instruction-following responses \citep{li2023self,wu2023self,josifoski2023exploiting,alpaca,li2024getting}. Besides, high-quality data can also be distilled from strong models for fine-tuning weaker ones \citep{gunasekar2023textbooks,abdin2024phi,li2023textbooks,ding2023enhancing,peng2023instruction}. To construct synthetic datasets for offline RLHF, a popular pipeline \citep{cui2023ultrafeedback,tunstall2023zephyr,wang2024far,ivison2023camels,starling2023} involves selecting responses sampled from \textit{various} LLMs on a set of prompts in the hope to increase the diversity of the data that can span the whole language space. However, data manually collected in such a passive way does not consider what improves the model most through its training, leaving the potentially high-reward regions unexplored. 

\vspace{-0.15cm}
\paragraph{Iterative Online Preference Optimization.} Compared to offline RLHF algorithms \citep{rafailov2024direct,zhao2023slic,azar2023general} that collect preference datasets ahead of training, online RLHF \citep{ouyang2022training,guo2024direct}, especially the iterative/batched online RLHF \citep{bai2022training,xu2023some,chen2022human,gulcehre2023reinforced,snorkelaipair,xiong2023gibbs,calandriello2024human,rosset2024direct} has the potential to gather better and better synthetic data as the model improves. As a special case, self-aligned models match their responses with desired behaviors, such as model-generated feedback \citep{yuan2024self,yuanzhe2024iterative,sun2024principle,wang2024enhancing}. Unfortunately, the above methods still passively explore by relying on the randomness during sampling and easily get stuck at local optima and overfit to the current data due to the vast space of natural language. A notable exception is \cite{dwaracherla2024efficient}, which proposed to use ensembles of RMs to approximately measure the uncertainty for posterior-sampling active exploration. On the contrary, our method explores based on the optimistic bias and does not estimate the uncertainty explicitly, bypassing the need to fit multiple RMs. 

\vspace{-0.15cm}
\paragraph{Active Exploration.} In fact, active exploration has been widely studied beyond LLMs. Similar to \cite{dwaracherla2024efficient}, most existing sample-efficient RL algorithms first estimate the uncertainty of the environment using historical data and then either plan with optimism \citep{auer2002using,russo2013eluder,jin2020provably,mehta2023sample,das2024provably}, or select the optimal action from a statistically plausibly set of values sampled from the posterior distribution \citep{strens2000bayesian,osband2013more,osband2023approximate,zhang2022conservative,li2024hyperagent}. The proposed self-exploration objective can be categorized as an optimism-based exploration method. However, most previous works require the estimation of the upper confidence bound, which is often intractable. Ensemble methods \citep{osband2024epistemic,chua2018deep,lu2017ensemble} can serve as approximations to estimate the uncertainty but are still computationally inefficient. 

\vspace{-0.15cm}
\paragraph{Concurrent Work.} We highlight the concurrent work (to the first version of the current paper) of \cite{xie2024exploratory, cen2024value, liu2024provably}, among which \cite{xie2024exploratory} establishes the first analysis of the sample complexity of a DPO algorithm in the online setting of RLHF (formulated as MDPs). All of them focus on incorporating an SFT loss or a similar term (as bonus or penalty) alongside the DPO loss as an optimistic or pessimistic adjustment in the online or offline setting, respectively. \cite{xie2024exploratory, cen2024value} and the current paper focus on the former, while \cite{liu2024provably} focuses on the latter. In the second version of the current paper, we provide the sample complexity of SELM following the proof technique of \cite{xie2024exploratory}. Through a reduction technique from \cite{xie2024exploratory}, we show how to connect the sample complexity of SELM to that of existing RL algorithms \citep{zhong2022gec, liu2024maximize}, which are not tailored to RLHF but enjoy strong theoretical guarantees. 


\section{Background}
\paragraph{Large Language Models.} A language model $\pi\in\Delta_\mathcal{Y}^\mathcal{X}$ typically takes the prompt $x\in\mathcal{X}$ as input and outputs the response $y\in\mathcal{Y}$. Here, $\mathcal{X}$ and $\mathcal{Y}$ are finite spaces of prompts and responses, respectively. Given the prompt $x\in\mathcal{X}$, a discrete probability distribution $\pi(\cdot\mid x)\in\Delta_\mathcal{Y}$ is generated, where $\Delta_\mathcal{Y}$ is the set of discrete distributions over $\mathcal{Y}$. After pretraining and Supervised Fine-Tuning (SFT), preference alignment is employed to enhance the ability of the language model to follow instructions with human intentions.

\paragraph{Reinforcement Learning from Human Feedback (RLHF).}
Standard RLHF frameworks consist of learning a reward model and then optimizing the LLM policy using the learned reward. 

Specifically, a point-wise reward $r(x, y): \mathcal{X}\times\mathcal{Y}\rightarrow \mathcal{R}$ represents the Elo score \citep{elo1978rating} of the response $y$ given the prompt $x$. Then the preference distribution can be expressed by the Bradley-Terry model that distinguishes between the preferred response $y_w$ and the dispreferred response $y_l$ given prompt $x$, denoted as $y_w\succ y_l \mid x$, using the logistic function $\sigma$:
\#\label{bt_model}
p(y_w\succ y_l \mid x) &:= \EE_{h}\bigl[\1(h \text{ prefers } y_w \text{ over } y_l \text{ given } x)\bigr] \notag\\
&\,= \sigma\bigl(r(x, y_w) - r(x, y_l)\bigr) = \frac{\exp\bigl(r(x, y_w)\bigr)}{\exp\bigl(r(x, y_w)\bigr) + \exp\bigl(r(x, y_l)\bigr)},
\#
where $h$ denotes the human rater and the expectation is over $h$ to account for the randomness of the choices of human raters we ask for their preference. When provided a static dataset of $N$ comparisons $\mathcal{D}=\{x_i, y_{w,i}, y_{l,i}\}_{i=1}^N$, the parameterized reward model can be learned by minimizing the following negative log-likelihood loss:
\#\label{lr_loss}
\mathcal{L}_{\text{lr}}(r; \mathcal{D}) = -\EE_{(x, y_w, y_l)\sim\mathcal{D}}\bigl[\log\sigma\bigl(r(x, y_w) - r(x, y_l)\bigr)\bigr].
\#

Using the learned reward, the LLM policy $\pi\in\Delta_\mathcal{Y}^\mathcal{X}$ is optimized with reinforcement learning (RL) to maximize the expected reward while maintaining a small deviation from some base reference policy $\pi_{\text{ref}}$, i.e., maximizing the following objective
\#\label{rlhf_kl}
\mathcal{J}(\pi) = \EE_{x\sim\mathcal{D}, y\sim\pi(\cdot\mid x)}\bigl[r(x, y)\bigr] - \beta\mathbb{D}_{\text{KL}}(\pi \,|| \,\pi_{\text{ref}}),
\#
where $\beta$ is a hyperparameter and $\mathbb{D}_{\text{KL}}(\pi \,|| \,\pi_{\text{ref}}):=\EE_{x\sim\mathcal{D}}[\text{KL}(\pi(\cdot \mid x) \,||\, \pi_{\text{ref}}(\cdot \mid x))]$ is the expected Kullback-Leibler (KL) divergence. An ideal $\pi_{\text{ref}}$ is the policy that helps mitigate the distribution shift issue \citep{rafailov2024direct,guo2024direct} between the true preference distribution and the policy $\pi$ during the off-policy RL training. Since we only have access to the dataset $\mathcal{D}$ sampled from the unavailable true preference distribution, $\pi_{\text{ref}}$ can be obtained by fine-tuning on the preferred responses in $\mathcal{D}$ or simply setting $\pi_{\text{ref}}=\pi^{\text{SFT}}$ and performing RLHF based on the SFT model.

\paragraph{Direct Alignment from Preference.} With the motivation to get rid of a separate reward model, which is computationally costly to train, recent works \citep{rafailov2024direct,azar2023general,zhao2023slic,tunstall2023zephyr,ethayarajh2024kto} derived the preference loss as a function of the policy by changing of variables. Among them, DPO \citep{rafailov2024direct} shows that when the BT model in \eqref{bt_model} can perfectly fit the preference, the global optimizers of the RLHF objective in \eqref{rlhf_kl} and the following loss are equivalent:
\$
\mathcal{L}_{\text{DPO}}(\pi;\mathcal{D}) = -\EE_{(x, y_w, y_l)\sim\mathcal{D}}\biggl[\log\sigma\biggl(\beta\log\frac{\pi(y_w\mid x)}{\pi_{\text{ref}}(y_w\mid x)} - \beta\log\frac{\pi(y_l\mid x)}{\pi_{\text{ref}}(y_l\mid x)}\biggr)\biggr].
\$

\section{Self-Exploring Language Models}
\subsection{RM-Free Objective for Active Exploration}
\label{sec_der}
In this section, we present several modifications to the optimistically biased objective \eqref{eq_intro} motivated in the introduction. Then we derive an RM-free objective for the LLM policy and analyze how active exploration works by examining its gradient.

First, we consider the equivalence of \eqref{eq_intro}: $\max_r-\mathcal{L}_{\text{lr}}(r; \mathcal{D}) + \alpha\max_\pi\EE_{y\sim\pi}[r(x, y)]$, where the inner $\pi$ is deterministic when optimal. To account for the change of $\pi$ relative to the reference policy $\pi_{\text{ref}}$, we introduce two modifications: (1) replacing the optimistic bias term $\max_\pi\EE_{y\sim\pi}[r(x, y)]$ with $\max_\pi\EE_{y\sim\pi, y'\sim\pi_{\text{ref}}}[r(x, y)-r(x, y')]$, and (2) incorporating a KL-divergence loss term between $\pi$ and $\pi_{\text{ref}}$. These changes ensure that the resulting optimistic RM elicits responses with high potential unknown to the reference policy $\pi_{\text{ref}}$ while minimizing the deviation between $\pi$ and $\pi_{\text{ref}}$.

Formally, for the reward $r$, the bilevel optimization problem with optimism is formulated as:
\#\label{our_obj}
\max_r  -\mathcal{L}_{\text{lr}}(r; \mathcal{D}_t) +\alpha\max_\pi\Biggl(\underbrace{\EE_{\substack{x\sim\mathcal{D}_t, y\sim\pi(\cdot\mid x)\\ y'\sim\pi_{\text{ref}}(\cdot\mid x)}}\Bigl[r(x, y) - r(x, y')\Bigr] - \beta\mathbb{D}_{\text{KL}}(\pi \,|| \,\pi_{\text{ref}})}_{\mathcal{F}(\pi; r)}\Biggr),
\#
where $\mathcal{D}_t=\{x_i, y_{w,i}^t, y_{l,i}^t\}_{i=1}^N$ is the associated dataset at iteration $t$ and $\mathcal{L}_{\text{lr}}$ is the logistic regression loss defined in \eqref{lr_loss}. The nested optimization in \eqref{our_obj} can be handled by first solving the inner optimization $\mathcal{F}(\pi; r)$ to obtain $\pi_r$ that is optimal under $r$. The solution is as follows and we defer all the derivations in this section to Appendix \ref{derivation_1}.
\$
\pi_r(y\mid x) := \argmax_\pi\mathcal{F}(\pi; r) = \frac{1}{Z(x)}\pi_{\text{ref}}(y\mid x)\exp\bigl(r(x, y) / \beta\bigr), 
\$
where the partition function $Z(x) = \sum_y\pi_{\text{ref}}(y| x)\exp(r(x, y)/\beta)$. By substituting $\pi=\pi_r$ into $\mathcal{F}(\pi; r)$, we can rewrite the bilevel objective in \eqref{our_obj} as a single-level one: \$\max_r -\mathcal{L}_{\text{lr}}(r; \mathcal{D}_t) + \alpha\mathcal{F}(\pi_r; r).\$
Following the implicit reward formulation in DPO, we reparameterize the reward function with $\theta\in\Theta$ as $\hat{r}_\theta(x, y)=\beta(\log\pi_\theta(y\mid x) - \log\pi_{\text{ref}}(y\mid x))$, which is the optimal solution of \eqref{rlhf_kl} and can express \textit{all} reward classes consistent with the BT model as proved in \citep{rafailov2024direct}. With the above change of variable, we obtain the RM-free objective for direct preference alignment with optimism:
\#\label{final_obj}
\max_{\pi_\theta} -\mathcal{L}_{\text{DPO}}(\pi_\theta; \mathcal{D}_t) - \alpha\beta\EE_{x\sim\mathcal{D}, y\sim\pi_{\text{ref}}(\cdot\mid x)}\bigl[\log\pi_\theta(y\mid x)\bigr].
\#
We now analyze how this new objective encourages active exploration. Specifically, we derive the gradient of \eqref{final_obj} with respect to $\theta$ as
\#\label{eq_grad}
&\underbrace{\beta\EE_{(x, y_w, y_l)\sim\mathcal{D}_t}\Bigl[\sigma\bigl(\hat{r}_\theta(x, y_l) - \hat{r}_\theta(x, y_w)\bigr)\bigl(\nabla_\theta\log\pi_\theta(y_w\mid x) - \nabla_\theta\log\pi_\theta(y_l\mid x)\bigr)\Bigr]}_{-\nabla_\theta\mathcal{L}_{\text{DPO}}(\pi_\theta; \mathcal{D}_t)} \notag\\
&\qquad\qquad\qquad\qquad\qquad\qquad - \alpha\beta\EE_{x\sim\mathcal{D}, y\sim\pi_\theta(\cdot\mid x)}\bigl[\exp\bigl(-\hat{r}_\theta(x, y)/\beta\bigr)\nabla_\theta\log\pi_\theta(y\mid x)\bigr].
\#
We note that the second line, corresponding to the gradient of the optimism term, decreases the log-likelihood of response $y$ generated by $\pi_\theta$ that has a high value of $\exp(-\hat{r}_\theta(x, y)/\beta)$. Therefore, the added optimism term biases the gradient toward parameter regions that can elicit responses $y$ with high implicit reward $\hat{r}_\theta$, consistent with our intuition outlined in Figure \ref{urm_illu}.

This also explains why $\EE_{\pi_{\text{ref}}}[\log\pi_\theta]$ is minimized in our objective \eqref{final_obj}, which is equivalent to maximizing the KL divergence between $\pi_{\text{ref}}$ and $\pi_\theta$, while the reverse KL in the policy optimization objective \eqref{rlhf_kl} is minimized. For the DPO gradient $\nabla_\theta\mathcal{L}_{\text{DPO}}(\pi_\theta; \mathcal{D}_t)$, the degree of deviation of policy $\pi_\theta$ from $\pi_{\text{ref}}$ only affects the preference estimated with $\hat{r}_\theta$. In other words, $\sigma(\hat{r}_\theta(x, y_l) - \hat{r}_\theta(x, y_w))$ is a scalar value and the policy deviation only determines the \textit{step size} of the policy gradient, instead of its \textit{direction}. On the other hand, our added exploration term directly controls the direction of the gradient toward potentially more rewarding areas while still fitting the preference data in $\mathcal{D}_t$. As more feedback data is collected iteratively, deviating from the unbiasedly fitted model incurs a higher DPO loss, which ultimately dominates our objective at convergence. This mechanism ensures that the resulting LLM effectively balances between exploring novel responses and exploiting previously observed ones, leading to a more accurate and aligned model.

\subsection{Algorithm}
\label{sec_algo}
With the optimistically biased objective derived above, the language model can actively generate OOD responses worth exploring. Human or AI feedback follows to reduce the uncertainty in these regions. These two steps are executed iteratively to get a more and more aligned model. 

In practice, we split the offline preference dataset into three portions with equal sizes, one for each iteration. Besides, we use AI rankers, such as external RMs, to provide feedback on the model-generated response and the original chosen, rejected responses. The complete pseudocode of our algorithm, named \textit{Self-Exploring Language Models} (SELM), is outlined in Algorithm \ref{alg_se}.
\begin{algorithm}[H]
\caption{Self-Exploring Language Models (SELM)}
\begin{algorithmic}[1]\label{alg_se}
\REQUIRE Reference model $\pi_{\text{ref}}$, preference dataset $\mathcal{D}$, online iterations $T$, optimism coefficient $\alpha$.
\FOR{iteration $t = 1, 2, \ldots, T$}
\STATE Set $\mathcal{D}_{t}$ as the $t$-th portion of $\mathcal{D}$ and generate $y\sim\pi_{\text{ref}}(\cdot\mid x)$ for each prompt $x$ in $\mathcal{D}_t$.
\STATE Rank $\{y, y_w, y_l\}$ and update $\mathcal{D}_t$ to contain the best (chosen) and worst (rejected) responses.
\STATE Train the LLM $\pi_{\theta_t} = \argmax_{\pi_\theta} \{-\mathcal{L}_{\text{DPO}}(\pi_\theta; \mathcal{D}_t) - \alpha \EE_{x\sim\mathcal{D}_t}[\log\pi_{\theta}(y \mid x)]\}$, let $\pi_{\text{ref}}=\pi_{\theta_t}$.
\ENDFOR 
\end{algorithmic}
\end{algorithm}
  
\section{Analysis}
\label{analysis}
\subsection{Self-Exploration Reduces Indiscriminate Favor of Unseen Extrapolations}
It has been observed recently \citep{rafailov2024r,pal2024smaug,xu2024dpo} that DPO decreases the likelihood of responses generated by the reference policy. It is because for any prompt $x$, at convergence when $\pi_\theta \neq \pi_{\text{ref}}$, it holds that
\$\EE_{y\sim\pi_{\text{ref}}}\bigl[\hat{r}_\theta(x, y)/\beta\bigr] = \EE_{y\sim\pi_{\text{ref}}}\bigl[\log\pi_\theta(y\mid x) - \log\pi_{\text{ref}}(y\mid x)\bigr] = -\text{KL}\bigl(\pi_{\text{ref}}(\cdot\mid x) \,||\, \pi_\theta(\cdot\mid x)\bigr) < 0,\$
while at the beginning of training when $\pi_\theta = \pi_{\text{ref}}$, the above terms are zero. Thus, the expected implicit reward $\hat{r}_\theta$ as well as the likelihood of $\pi_\theta$ will decrease on the reference model's responses. 
This indicates that DPO stimulates a biased distribution favoring unseen extrapolated responses. In the online iterative setting that we consider, the LLM policy generates responses and receives preference feedback alternately, where biasing towards OOD regions may sometimes help discover outstanding novel responses. However, DPO \textit{indiscriminately} favors unseen extrapolations and \textit{passively} explores based purely on the randomness inherent in sampling from the LLM. As a consequence, the vast space of natural language makes it almost impossible to exhaustively explore all the possible responses and identify those that most effectively benefit alignment.

Next, we demonstrate that SELM mitigates this issue by performing guided exploration. Specifically, consider the proposed self-exploration objective in \eqref{final_obj}, which, in addition to the standard DPO loss, also minimizes $\EE_{x, y\sim\pi_{\text{ref}}}[\log\pi_\theta(y\mid x)]$. We now investigate how the probability distribution changes with this term incorporated.
\begin{theorem}
\label{thm}
For any $\rho\in\Theta$ in the policy parameter space, let $\hat{r}_\rho(x, y) = \beta(\log\pi_\rho(y\mid x) - \log\pi_{\text{ref}}(y\mid x))$ be the reparameterized implicit reward. Denote $\pi^{\min}_\rho$ as the policy that minimizes the expected implicit reward under the KL constraint, i.e.,
\#\label{eq_pi_rho}
\pi^{\min}_\rho(\cdot\mid x) := \argmin_\pi\EE_{x, y\sim\pi(\cdot\mid x)}\bigl[\hat{r}_\rho(x, y)\bigr] + \beta\mathbb{D}_{\text{KL}}(\pi \,|| \,\pi_\rho).
\#
Then minimizing $\EE_{x, y\sim\pi_{\text{ref}}}[\log\pi_\theta(y| x)]$ decreases the likelihood of responses sampled from $\pi^{\min}_\rho$:
\$
\min_{\pi_\theta}\EE_{x, y\sim\pi_{\text{ref}}(\cdot\mid x)}\bigl[\log\pi_\theta(y\mid x)\bigr] = \min_{\pi_\theta}\EE_{x,y\sim\pi^{\min}_\rho(\cdot\mid x)}\bigl[\log\pi_\theta(y\mid x)\bigr].
\$
\end{theorem}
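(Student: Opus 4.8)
The plan is to show that the policy $\pi^{\min}_\rho$ defined in \eqref{eq_pi_rho} is in fact exactly $\pi_{\text{ref}}$, after which the claimed identity is immediate because the two sides become literally the same optimization problem. So the entire content of the theorem is the computation of the minimizer in \eqref{eq_pi_rho}.

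First I would observe that \eqref{eq_pi_rho} is a KL-regularized optimization of the same type already solved in Section \ref{sec_der} (the derivation of $\pi_r$). Indeed, minimizing $\EE_{y\sim\pi}[\hat r_\rho(x,y)] + \beta\,\mathrm{KL}(\pi(\cdot\mid x)\,||\,\pi_\rho(\cdot\mid x))$ over $\pi(\cdot\mid x)$ is the same as maximizing $\EE_{y\sim\pi}[-\hat r_\rho(x,y)] - \beta\,\mathrm{KL}(\pi(\cdot\mid x)\,||\,\pi_\rho(\cdot\mid x))$, and this decouples across prompts $x$, so it suffices to solve it pointwise. By the standard Gibbs-form solution (with ``reward'' $-\hat r_\rho$ and reference $\pi_\rho$), the minimizer is
\$
\pi^{\min}_\rho(y\mid x) = \frac{1}{Z_\rho(x)}\,\pi_\rho(y\mid x)\exp\!\bigl(-\hat r_\rho(x,y)/\beta\bigr),\qquad Z_\rho(x)=\sum_{y}\pi_\rho(y\mid x)\exp\!\bigl(-\hat r_\rho(x,y)/\beta\bigr).
\$

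Next I would substitute the reparameterization $\hat r_\rho(x,y) = \beta\bigl(\log\pi_\rho(y\mid x) - \log\pi_{\text{ref}}(y\mid x)\bigr)$, which gives $\exp(-\hat r_\rho(x,y)/\beta) = \pi_{\text{ref}}(y\mid x)/\pi_\rho(y\mid x)$. Plugging this in, the $\pi_\rho(y\mid x)$ factors cancel, so the numerator is simply $\pi_{\text{ref}}(y\mid x)$ and the partition function is $Z_\rho(x)=\sum_y\pi_{\text{ref}}(y\mid x)=1$. Hence $\pi^{\min}_\rho(\cdot\mid x)=\pi_{\text{ref}}(\cdot\mid x)$ for every $x$, i.e.\ $\pi^{\min}_\rho=\pi_{\text{ref}}$. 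Substituting $\pi^{\min}_\rho=\pi_{\text{ref}}$ into the right-hand side of the claimed equality then makes it coincide with the left-hand side verbatim, completing the proof.

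There is no real obstacle here beyond bookkeeping: one must be careful with the sign flip when turning the minimization into the canonical KL-regularized maximization, and with the fact that the objective in \eqref{eq_pi_rho} decouples over $x$ so the closed form applies pointwise; the only mild subtlety is confirming that the reparameterized $\hat r_\rho$ makes the partition function collapse to $1$, which is what forces $\pi^{\min}_\rho$ to be an honest normalized distribution equal to $\pi_{\text{ref}}$ (and also sidesteps any concern about the minima on either side being $-\infty$, since the two problems are identical).
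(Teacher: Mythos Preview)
Your proposal is correct and follows exactly the paper's approach: write down the Gibbs-form minimizer $\pi^{\min}_\rho(y\mid x)\propto\pi_\rho(y\mid x)\exp(-\hat r_\rho(x,y)/\beta)$, plug in the reparameterized $\hat r_\rho$ so that the factor collapses to $\pi_{\text{ref}}(y\mid x)$ with partition function $1$, and conclude $\pi^{\min}_\rho=\pi_{\text{ref}}$. Your write-up is in fact more detailed than the paper's, which states the closed form and $Z(x)=1$ without spelling out the cancellation.
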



The proofs for theorems in this section can be found in Appendix \ref{proof_indis} and \ref{proof_thm}. The above theorem states that maximizing the divergence between $\pi_\theta$ and $\pi_{\text{ref}}$ is essentially reducing the probability of generating responses with low implicit rewards reparameterized by any policy parameter $\rho$ during training. In other words, the LLM policy not only exploits the existing preference data but also learns to avoid generating the text $y$ that is assigned a low reward value. This process occurs in every iteration with updated reference models. Consequently, responses with high potential rewards are selectively preferred and many commonplace responses receive a small probability mass, thus mitigating the indiscriminate favoring of unseen responses and improving the exploration efficiency. In the next section, we will formally prove that the self-exploration mechanism is sample-efficient.

\subsection{Self-Exploration is Provably Sample-Efficient}
Following the proof technique of \cite{xie2024exploratory}, we provide the sample efficiency of the self-exploration mechanism by establishing a sublinear cumulative regret. Specifically, the cumulative regret $\mathcal{R}(T)$ up to $T$ iterations is defined as the cumulative performance discrepancy between the learned policy $\pi_t$ at iteration $t$ and the optimal policy $\pi^*$ over the run of the algorithm:
\$
\mathcal{R}(T) & = \sum_{t = 1}^T [\mathcal{J}(\pi^*) - \mathcal{J}(\pi_t)].
\$
The key idea is a reduction technique from \cite{xie2024exploratory}, which connects the sample complexity of SELM to that of existing RL algorithms \citep{zhong2022gec, liu2024maximize}. It is worth noting that the theoretical version of the self-exploration mechanism (Algorithm~\ref{alg_se_theory}) is a bit different from the practical one used in the numerical experiments and is closer to the proposed algorithm in \cite{xie2024exploratory}.
\begin{assumption}[Realizable Policy Class with Regularity Condition] \label{assumption:policy:class}
    We assume access to a policy class $\Pi$ containing the optimal policy $\pi^*$. Moreover, we assume that
    \$
    \left| \log \frac{\pi(y \mid x)}{\pi_{\mathrm{ref}}(y \mid x)} \right| \le R_{\max}.
    \$
    for any $\pi \in \Pi$ and prompt-response pair $(x, y)$.
\end{assumption}

Assumption \ref{assumption:policy:class} stipulates that the policy class $\Pi$ is sufficiently comprehensive to include the optimal policy. Additionally, it imposes a bounded condition on $\log (\pi/\pi_{\mathrm{ref}})$, which has been identified as the implicit reward function for DPO \citep{rafailov2024direct}.

\begin{theorem}\label{thm:regret}
Under Assumption~\ref{assumption:policy:class}, let $\eta = \sqrt{T d_{\mathrm{PGEC}}/(\exp(4R_{\max}) \log(|\Pi|/\delta)) }$, $\alpha = 2/(\eta \exp(4R_{\max}))$, and $\delta \in (0, 1)$. Then with probability at least $1 - \delta$, we have
    \$
    \mathcal{R}(T) &\lesssim \sqrt{d_{\mathrm{PGEC}} \cdot \exp(2R_{\max}) \cdot T \cdot \log (|\Pi|/\delta)},
    \$
where $\lesssim$ omits absolute constants, and $d_{\mathrm{PGEC}}$ is a preference-based version of Generalized Eluder Coefficient \citep[GEC;][]{zhong2022gec} defined in Appendix~\ref{def:pgec} capturing the complexity of learning problem. For log-linear policy class $\Pi = \{\pi_\theta: \pi_\theta(y \given x) \propto \exp(\la \phi(x, y), \theta\ra/\beta)\}$ with $d$-dimensional feature $\phi$, it holds that $d_{\mathrm{PGEC}} \le \tilde{O}(d)$. 
\end{theorem}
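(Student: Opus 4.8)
The plan is to follow the reduction strategy of \cite{xie2024exploratory}: rather than analyzing the preference-based online learning problem directly, I would (i) cast the theoretical SELM update (Algorithm~\ref{alg_se_theory}) as an optimistic maximum-likelihood procedure, (ii) decompose the cumulative regret into an \emph{optimism} term and an \emph{in-sample estimation-error} term, and (iii) control the latter using the preference-based Generalized Eluder Coefficient $d_{\mathrm{PGEC}}$ together with a Bradley--Terry concentration bound. The role of $d_{\mathrm{PGEC}}$ is exactly to convert cumulative in-sample training error, measured on the historical comparison data, into cumulative out-of-sample error under the currently deployed policy $\pi_t$, which is what the regret $\mathcal{R}(T)$ actually charges.

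Concretely, I would first write $\mathcal{J}(\pi^*) - \mathcal{J}(\pi_t)$ in terms of the implicit rewards $\hat r_{\theta_t}$. Because the theoretical update maximizes the DPO log-likelihood plus the optimistic bonus $-\alpha\beta\,\EE_{y\sim\pi_{\mathrm{ref}}}[\log\pi_\theta(y\mid x)]$, the learned $\pi_t$ is \emph{optimistic}: its induced value over-estimates $\mathcal{J}(\pi^*)$ up to the statistical error of the fit. Writing $\hat{\mathcal{J}}_t$ for this optimistic estimate and using $\hat{\mathcal{J}}_t(\pi_t)\ge \mathcal{J}(\pi^*)-\mathcal{E}^{\mathrm{opt}}_t$, each per-iteration regret is bounded by the generalization gap $\hat{\mathcal{J}}_t(\pi_t)-\mathcal{J}(\pi_t)$ plus $\mathcal{E}^{\mathrm{opt}}_t$, i.e.\ by the discrepancy between the preference prediction error of $\hat r_{\theta_t}$ under $\pi_t$'s own generations and the same error on the already-collected comparisons. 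Summing over $t$ produces a cumulative out-of-sample prediction error that is precisely the quantity $d_{\mathrm{PGEC}}$ is defined to bound.

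Next I would invoke the defining inequality of $d_{\mathrm{PGEC}}$ (Appendix~\ref{def:pgec}) to upper bound the cumulative out-of-sample error by $\sqrt{d_{\mathrm{PGEC}}\, T \sum_{t=1}^T \mathcal{E}_t}$ up to lower-order terms, where $\mathcal{E}_t$ denotes the in-sample estimation error at iteration $t$, and then control $\sum_{t=1}^T \mathcal{E}_t$ by a standard MLE generalization bound for the Bradley--Terry model over the class $\Pi$, contributing the $\log(|\Pi|/\delta)$ factor with probability at least $1-\delta$. The $\exp(R_{\max})$ factors enter here: translating a small Bradley--Terry log-likelihood (equivalently squared-Hellinger) error into a small error on the reward differences $\hat r_\theta(x,y)-\hat r_\theta(x,y')$ costs a factor governed by the curvature of $\sigma$, which is controlled on the range $[-2R_{\max},2R_{\max}]$ guaranteed by Assumption~\ref{assumption:policy:class}. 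Choosing $\eta$ and $\alpha$ as stated optimally balances the $\eta$-scaled estimation term against the $\eta^{-1}$-scaled optimism overhead, yielding $\mathcal{R}(T)\lesssim \sqrt{d_{\mathrm{PGEC}}\cdot\exp(2R_{\max})\cdot T\cdot\log(|\Pi|/\delta)}$.

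Finally, for the log-linear instantiation I would verify $d_{\mathrm{PGEC}}\le\tilde O(d)$ by observing that the induced reward-difference class $\langle\phi(x,y)-\phi(x,y'),\theta\rangle/\beta$ is linear in the $d$-dimensional feature map, so the preference-based eluder-type argument reduces to the classical linear eluder dimension bound of $\tilde O(d)$. The main obstacle I anticipate is the preference-to-reward conversion in the optimism and concentration steps: unlike reward-based RL, here only pairwise comparisons are observed, so both establishing optimism and defining $d_{\mathrm{PGEC}}$ must be carried out at the level of the logistic likelihood, and carefully tracking how the bounded-ratio condition propagates through $\sigma'$ to avoid a dependence worse than $\exp(2R_{\max})$ is the delicate part. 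This is exactly where the boundedness in Assumption~\ref{assumption:policy:class} and the reduction machinery of \cite{xie2024exploratory} connecting $d_{\mathrm{PGEC}}$ to the GEC guarantees of \cite{zhong2022gec, liu2024maximize} do the heavy lifting.
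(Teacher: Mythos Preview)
Your proposal is correct and follows essentially the same route as the paper: the regret is split into an optimism term (controlled by the SELM/Algorithm~\ref{alg_se_theory} update rule, which yields $\EE_{\pi_{\mathrm{ref}}}[\beta\log(\pi_t/\pi^*)]\le (\beta/\alpha)(\mathcal L_{\mathrm{DPO}}(\pi^*;\mathcal D_{t-1})-\mathcal L_{\mathrm{DPO}}(\pi_t;\mathcal D_{t-1}))$) and an out-of-sample estimation term handled by the PGEC inequality plus an AM--GM split, with the Bradley--Terry MLE concentration (Lemma~\ref{lemma:concentration}) supplying $\log(|\Pi|/\delta)$ and the $\exp(R_{\max})$ curvature factors. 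The one concrete step you leave implicit is the use of the closed-form identity $r(x,y)-\beta\log(\pi^*(y\mid x)/\pi_{\mathrm{ref}}(y\mid x))=\beta\log Z_r(x)$ to replace the $\pi^*$-expectation by a $\pi_{\mathrm{ref}}$-expectation in the decomposition \eqref{eq:220}, which is what makes the optimism term line up exactly with the bonus $-\alpha\,\EE_{\pi_{\mathrm{ref}}}[\log\pi(y\mid x)]$ in the SELM objective.
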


The proof technique is from \cite{xie2024exploratory}, which connects RLHF with RL and allows us to use the preference-based version of GEC \citep{zhong2022gec,liu2024maximize} as the complexity measure to characterize the cumulative regret $\cR(T)$. We restate the proof technique from \cite{xie2024exploratory} for completeness. We emphasize that it is not a novel contribution of the present work. Since the cumulative regret is sublinear in the number of iterations $T$, the above theorem indicates that the policy $\pi_t$ converges to the optimal $\pi^*$ within sufficient iterations. Moreover, by the standard online-to-batch argument, Theorem~\ref{thm:regret} shows that SELM is capable of finding an $\varepsilon$-optimal policy with a sample complexity of $\tilde{O}(1/\varepsilon^2)$. This highlights the sample efficiency of SELM from the theoretical perspective. 
\section{Experiments}
\subsection{Experiment Setup}
We adopt UltraFeedback \citep{cui2023ultrafeedback} as our training dataset, which contains 61k preference pairs of single-turn conversations. For the external ranker during online alignment, we choose the small-sized PairRM (0.4B) \citep{jiang2023llm}. All experiments are conducted on 8xA100 GPUs.

Due to the absence of performant open-source online direct alignment codebases at the time of this study, we first implement an iterative version of DPO as the baseline, adhering to the same steps as Algorithm \ref{alg_se} but training the LLM with the standard DPO objective. Then we conduct a grid search over hyperparameters, such as the batch size, learning rate, and iteration number, to identify the optimal settings for the iterative DPO baseline. We follow these best settings to train SELM. 
In addition, we apply iterative DPO and SELM on instruction fine-tuned models. Specifically, we consider two series of LLMs: Zephyr \citep{tunstall2023zephyr} and Llama-3 \citep{llama3}, to demonstrate the robustness of SELM. Since the official Zephyr-7B-$\beta$ model is fine-tuned with DPO on the same UltraFeedback dataset, to avoid overoptimization, we choose Zephyr-7B-SFT\footnote{https://huggingface.co/HuggingFaceH4/mistral-7b-sft-beta} as the base model and perform $3$ iterations of SELM after a single iteration of standard DPO training on the first portion of the training data (we refer to this model as Zephyr-7B-DPO). For Llama-3-8B-Instruct\footnote{https://huggingface.co/meta-llama/Meta-Llama-3-8B-Instruct} that is already fine-tuned with RLHF, we directly apply $3$ iterations of SELM training.

\subsection{Experiment Results}
\label{sec_exp_results}
We first report the performance of SELM and the baselines on the instruction-following chat benchmarks AlpacaEval 2.0 \citep{dubois2024length} and MT-Bench \citep{zheng2024judging} in Table \ref{tab_chat}. We can observe that for AlpacaEval 2.0, SELM significantly boosts Zephyr-7B-SFT and Llama-3-8B-Instruct, achieving length-controlled (LC) win rate improvements of $+16.24\%$ and $+11.75\%$, respectively. This enhancement results in models that are competitive with or even superior to much larger LLMs, such as Yi-34B-Chat \citep{young2024yi} and Llama-3-70B-Instruct. For the multi-turn MT-Bench, which exhibits higher variance, we report the average scores of SELM and DPO baselines across $3$ runs. We observe that SELM improves the scores by $+2.31$ and $+0.32$, respectively. Furthermore, the proposed method self-explores and enhances the model monotonically, with consistent performance improvements in each iteration. This validates the robustness of our algorithm. Compared to other iterative post-training algorithms, such as SPIN \citep{chen2024self}, DNO \citep{rosset2024direct}, and SPPO \citep{wu2024self}, SELM gains more improvements on both benchmarks when using the weaker base model (Zephyr-7B-SFT), and achieves the best performance when using Llama-3-8B-Instruct as the base model. 

\begin{table}[h]
\small
\centering
\begin{tabular}{l|ccc|ccc}
\hline
 & \multicolumn{3}{c|}{AlpacaEval 2.0} & \multicolumn{3}{c}{MT-Bench} \\
Model & \begin{tabular}{@{}c@{}}LC Win Rate\end{tabular} & \begin{tabular}{@{}c@{}}Win Rate \end{tabular} & \begin{tabular}{@{}c@{}}Avg. len \end{tabular} & Avgerage & \begin{tabular}{@{}c@{}}1st Turn\end{tabular} & \begin{tabular}{@{}c@{}}2nd Turn\end{tabular}\\ \hline
Zephyr-7B-SFT  & 8.01  & 4.63  & 916 & 5.30  & 5.63  & 4.97\\
Zephyr-7B-DPO  & 15.41  & 14.44  & 1752 & 7.31  &  7.55 & 7.07  \\
DPO Iter 1 (Zephyr)  &  20.53 & 16.69  & 1598 & 7.53 &  7.81  & 7.25 \\
DPO Iter 2 (Zephyr) & 22.12  & 19.82  & 1717  &  7.55 & 7.85 &  7.24 \\
DPO Iter 3 (Zephyr)  & 22.19 \textcolor{red}{\small{($\uparrow$14.18)}}  & 19.88 & 1717  & 7.46 \textcolor{red}{\small{($\uparrow$2.16)}}  & 7.85  & 7.06 \\
SELM Iter 1 (Zephyr)  & 20.52  & 17.23  & 1624 & 7.53  & 7.74 & 7.31\\
SELM Iter 2 (Zephyr) & 21.84 & 18.78  & 1665  & 7.61 & \textbf{7.85} & 7.38 \\
SELM Iter 3 (Zephyr) & \textbf{24.25}\textcolor{red}{\small{($\uparrow$16.24)}}   & \textbf{21.05}  & 1694 & \textbf{7.61} \textcolor{red}{\small{($\uparrow$2.31)}} & 7.74  & \textbf{7.49}  \\ \hline
Llama-3-8B-Instruct & 22.92  & 22.57  & 1899  & 7.93  & 8.47  & 7.38  \\
DPO Iter 1 (Llama3-It)  &  30.89 & 31.60  & 1979 & 8.07 &  8.44 & 7.70   \\
DPO Iter 2 (Llama3-It) & 33.91  & 32.95  & 1939  & 7.99  &  8.39 &  7.60 \\
DPO Iter 3 (Llama3-It)  & 33.17 \textcolor{red}{\small{($\uparrow$10.25)}} & 32.18  & 1930 & 8.18 \textcolor{red}{\small{($\uparrow$0.25)}} &  8.60 & 7.77 \\
SELM Iter 1 (Llama3-It) & 31.09  & 30.90  & 1956  & 8.09  &  8.57 & 7.61  \\
SELM Iter 2 (Llama3-It) & 33.53  & 32.61 & 1919 & 8.18  & \textbf{8.69} & 7.66   \\
SELM Iter 3 (Llama3-It) & \textbf{34.67} \textcolor{red}{\small{($\uparrow$11.75)}}  & \textbf{34.78}  & 1948 & \textbf{8.25} \textcolor{red}{\small{($\uparrow$0.32)}} & 8.53  & \textbf{7.98}  \\ \hline
SPIN &  7.23 & 6.54  & 1426 & 6.54 & 6.94 & 6.14 \\
Orca-2.5-SFT  & 10.76 & 6.99  & 1174  &  6.88 & 7.72 & 6.02  \\
DNO (Orca-2.5-SFT) & 22.59
& 24.97  & 2228 & 7.48
& 7.62  & 7.35  \\
Mistral-7B-Instruct-v0.2  & 19.39  & 15.75  & 1565 & 7.51 & 7.78  & 7.25   \\
SPPO (Mistral-it)  & 28.53
& 31.02  & 2163 & 7.59
& 7.84  & 7.34 \\ \hline
Yi-34B-Chat & 27.19  & 21.23  & 2123 & 7.90  & -  & -  \\
Llama-3-70B-Instruct & 33.17  & 33.18  & 1919 & 9.01  & 9.21  & 8.80  \\
GPT-4 Turbo (04/09) & 55.02  & 46.12  & 1802  & 9.19  & 9.38  & 9.00 \\ \hline
\end{tabular}
\caption{Results on AlpacaEval 2.0 and MT-Bench averaged with $3$ runs. Names inside the brackets are the models that are aligned based upon. The \textcolor{red}{red} arrows indicate the increment or decrement from the base model. Compared to iterative DPO and other online alignment baselines, SELM gains more improvements based on the weaker Zephyr-7B-SFT model and achieves superior performance that is competitive with much larger SOTA models when fine-tuned from Llama-3-8B-Instruct.}
\label{tab_chat}
\end{table}

Notably, the implemented iterative DPO is obtained through comprehensive grid searches of hyperparameters and practical designs (see Appendix \ref{app_exp} for details), making it a strong baseline comparable with SOTA online alignment algorithms fine-tuned from more advanced models. For example, DPO Iter 3 (Zephyr) achieves an MT-Bench score of $7.46$, representing a $2.16$ improvement over Zephyr-SFT ($5.30$) and coming close to DNO ($7.48$), which is fine-tuned from the stronger model Orca-2.5-SFT ($6.88$). Additionally, SPPO achieves an MT-Bench score of $7.59$, a modest improvement of $0.08$ over Mistral-it ($7.51$). SELM leverages the optimal hyperparameters of iterative DPO while delivering improvements with almost zero additional computational overhead.

We also conduct pairwise comparisons between SELM, iterative DPO, and the base models to validate the effectiveness of our method. The results for AlpacaEval 2.0 are shown in Figure \ref{fig_pair}. We observe that with the same number of training iterations and data, SELM consistently outperforms the iterative DPO counterpart. Additionally, when using Zephyr-7B-SFT as the base model, SELM outperforms iterative DPO even when the latter is trained with twice the data.

\begin{figure}[H]
\centering
\subfigure{
    \begin{minipage}[c]{0.47\linewidth}
        \centering
        \includegraphics[width=\textwidth]{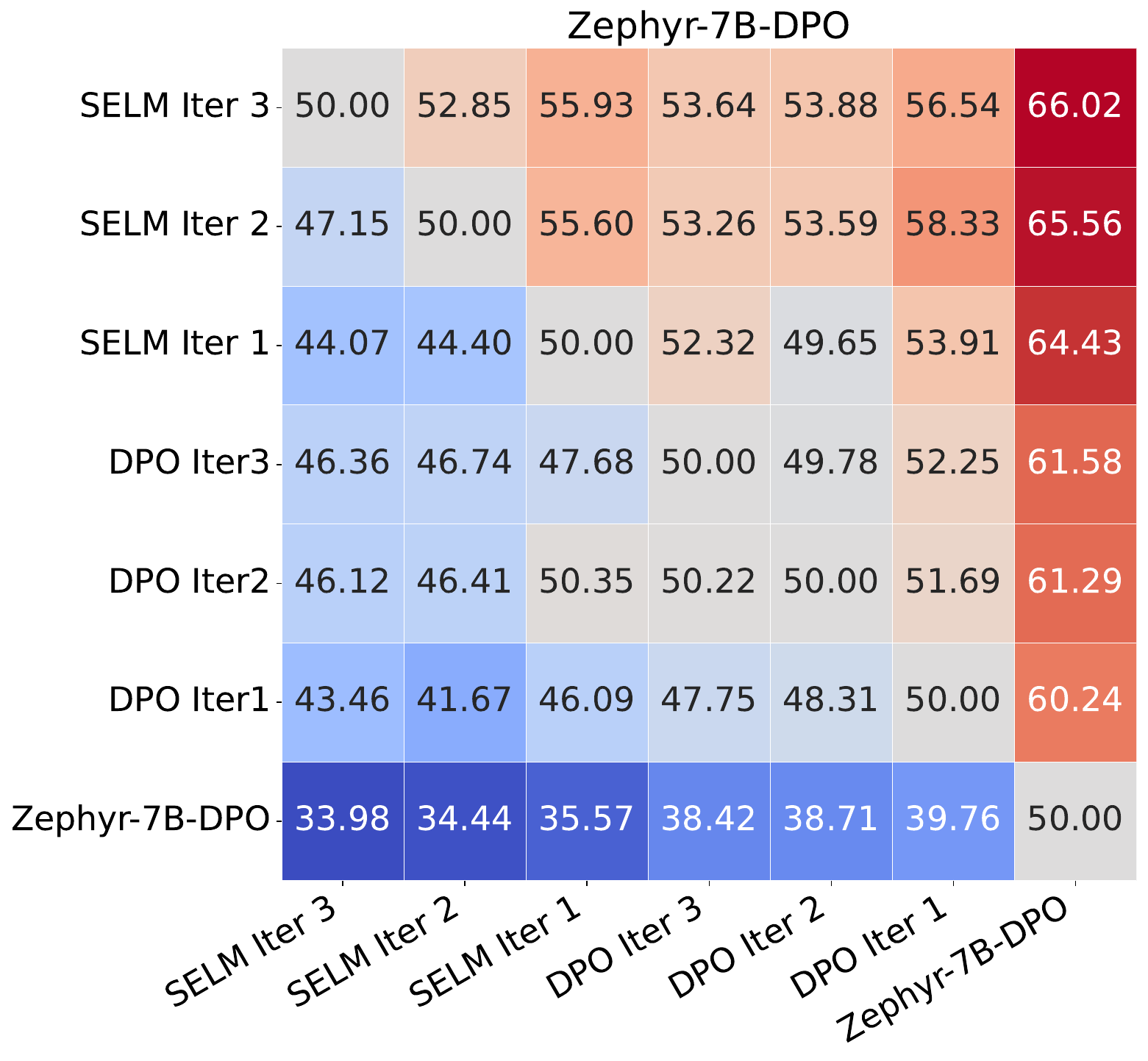}
    \end{minipage}
}
\subfigure{
    \begin{minipage}[c]{0.47\linewidth}
        \centering
        \includegraphics[width=\textwidth]{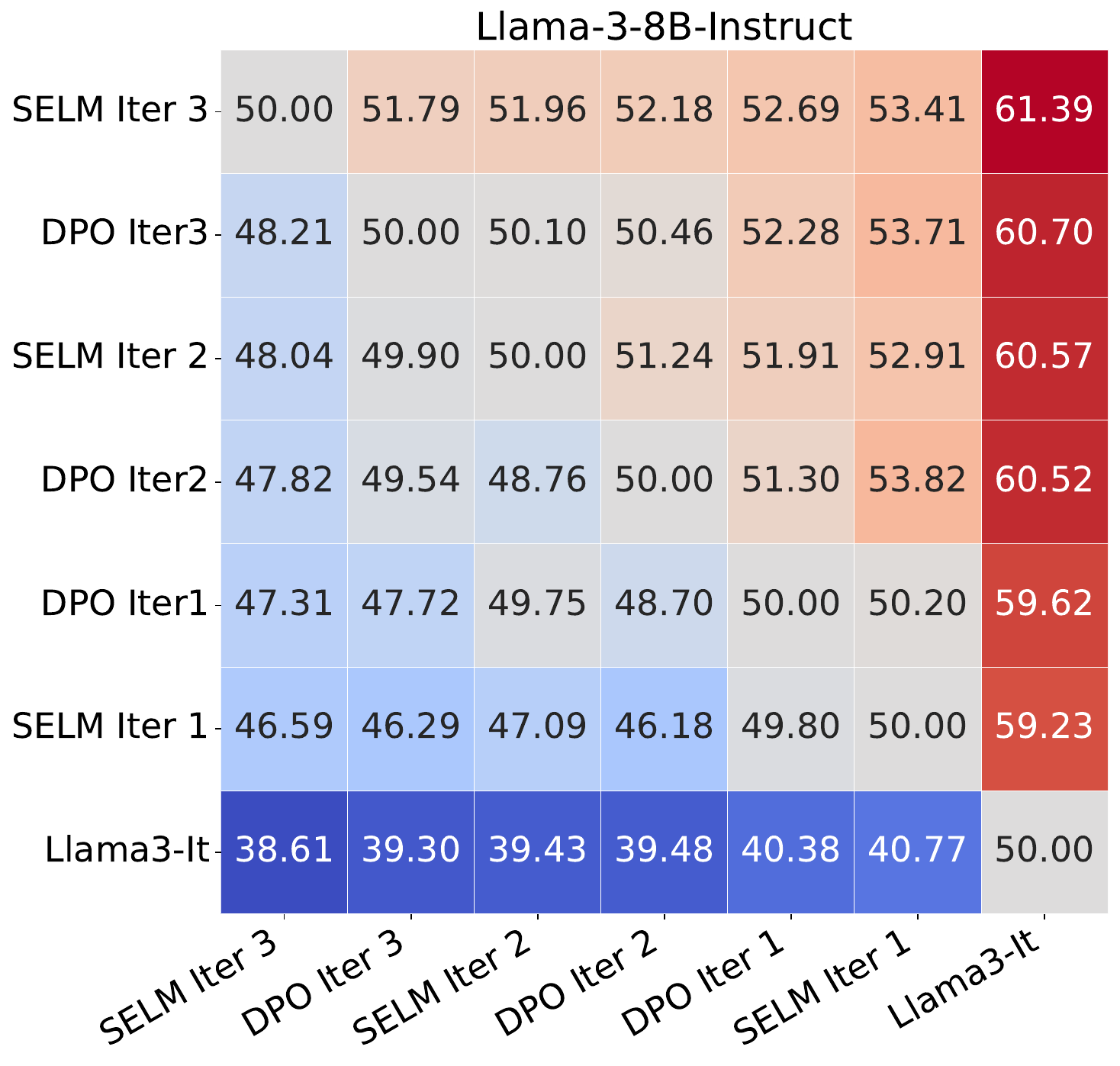}
    \end{minipage}

}\hfill
\vspace{-0.5cm}
\caption{Pairwise comparison between SELM, iterative DPO, and base models. Scores represent the LC win rates of the row models against the column models. Models positioned in higher rows have higher LC win rates against the base model and thus better performance.}
\label{fig_pair}
\end{figure}

Beyond instruction-following benchmarks, we also evaluate SELM and the baselines on several academic benchmarks, including GSM8K \citep{cobbe2021training}, HellaSwag \citep{zellers2019hellaswag}, ARC challenge \citep{clark2018think}, TruthfulQA \citep{lin2021truthfulqa}, EQ-Bench \citep{paech2023eq}, and OpenBookQA (OBQA) \citep{mihaylov2018can}. To better reflect the capabilities of LLMs, we adopt various settings for these benchmarks, including zero-shot, few-shot, and few-shot Chain-of-Thought (CoT) settings. The accuracy results for these multiple-choice QA benchmarks are provided in Table \ref{tab_academic}. It can be observed that both our method and the baselines can degrade after the RLHF phase on some benchmarks, which is known as the alignment tax \citep{askell2021general,noukhovitch2024language,li2024multi}. Nevertheless, our method is still able to improve the base models on most of the benchmarks and offers the best overall performance. 

We note that SELM is one of the instantiations of the proposed self-exploration objective in \eqref{eq_intro}, with reparameterized reward functions and algorithm-specific designs described in Section \ref{sec_algo}, such as the dataset partition and update rule. However, this objective is not restricted to the current implementation and can also be directly applied to any other online alignment framework, with or without a separate reward model, regardless of differences in algorithm designs. Thus, the proposed method is orthogonal to and can be integrated directly into the recent online RLHF workflows \citep{dong2024rlhf,xiong2023gibbs,hu2024openrlhf} that incorporate additional delicate designs with carefully curated datasets.

\vspace{-0.1cm}
\begin{table}[H]
\small
\begin{tabular}{l|ccccccc}
\hline
Models      & \begin{tabular}[c]{@{}c@{}}GSM8K\\(8-s CoT)\end{tabular} & \begin{tabular}[c]{@{}c@{}}HellaSwag\\ (10-s)\end{tabular} & \begin{tabular}[c]{@{}c@{}}ARC\\ (25-s)\end{tabular} & \begin{tabular}[c]{@{}c@{}}TruthfulQA\\ (0-s)\end{tabular} & \begin{tabular}[c]{@{}c@{}}EQ\\ (0-s)\end{tabular} & \begin{tabular}[c]{@{}c@{}}OBQA\\ (10-s)\end{tabular} & Average \\ \hline
Zephyr-7B-SFT            & 43.8 & 82.2 & 57.4 & 43.6 & 39.1 & 35.4 & 50.3 \\
Zephyr-7B-DPO     & \textcolor{red}{47.2} & 84.5 & 61.9 & 45.5 & 65.2 & 38.0 & 57.0 \\
DPO Iter 1 (Zephyr)      & 45.5 & 85.2 & 62.1 & 52.4 & 68.4 & 39.0 & 58.8 \\
DPO Iter 2 (Zephyr)      & 44.9 & 85.4 & 62.0 & 53.1 & 69.3 & 39.4 & 59.0 \\
DPO Iter 3 (Zephyr)       & 43.2 & 85.2 & 60.8 & 52.5 & 69.1 & 39.6 & 58.4 \\
SELM Iter 1 (Zephyr)       & \textcolor{blue}{46.3} & 84.8 & \textcolor{red}{62.9} & 52.9 & 68.8 & 39.6 & \textcolor{blue}{59.2} \\
SELM Iter 2 (Zephyr)      & 46.2 & \textcolor{blue}{85.4} & \textcolor{blue}{62.1} & \textcolor{blue}{53.1} & \textcolor{blue}{69.3} & \textcolor{blue}{39.6} & \textcolor{red}{59.3} \\
SELM Iter 3 (Zephyr)   & 43.8 & \textcolor{red}{85.4} & 61.9 & 52.4 & \textcolor{red}{69.9} & \textcolor{red}{39.8} & 58.9 \\ \hline
Llama-3-8B-Instruct  & 76.7 & 78.6 & 60.8 & 51.7 & 61.8 & 38.0 & 61.3 \\
DPO Iter 1 (Llama3-It)      & 78.5 & 81.7 & 63.9 & 55.5 & 64.1 & 42.6 & 64.4 \\
DPO Iter 2 (Llama3-It)      & 79.4 & 81.7 & 64.4 & 56.4 & \textcolor{red}{64.3} & 42.6 & 64.8 \\
DPO Iter 3 (Llama3-It)       & \textcolor{blue}{80.1} & 81.7 & 64.1 & 56.5 & 64.1 & 42.6 & 64.8 \\
SELM Iter 1 (Llama3-It)      & 78.7 & 81.7 & \textcolor{red}{64.5} & 55.4 & 64.1 & 42.4 & 64.5 \\
SELM Iter 2 (Llama3-It)      & 79.3 & \textcolor{blue}{81.8} & \textcolor{blue}{64.7} & \textcolor{blue}{56.5} & \textcolor{blue}{64.2} & \textcolor{blue}{42.6} & \textcolor{blue}{64.9} \\
SELM Iter 3 (Llama3-It)     & \textcolor{red}{80.1} & \textcolor{red}{81.8} & 64.3 & \textcolor{red}{56.5} & 64.2 & \textcolor{red}{42.8} & \textcolor{red}{65.0} \\ \hline
SPIN            & 44.7 & 85.9 & 65.9 & 55.6 & 54.4 & 39.6 & 57.7 \\
Mistral-7B-Instruct-v0.2   & 43.4 & 85.3 & 63.4 & 67.5 & 65.9 & 41.2 & 61.1 \\
SPPO (Mistral-it)    & 42.4 & 85.6 & 65.4 & 70.7 & 56.5 & 40.0 & 60.1 \\ \hline
\end{tabular}
\vspace{-0.1cm}
\caption{Performance comparison between SELM and the baselines on academic multi-choice QA benchmarks in standard zero-shot, few-shot, and CoT settings. Here, n-s refers to n-shot. The \textcolor{red}{red} and \textcolor{blue}{blue} texts represent the best and the second-best results. \vspace{-0.2cm}}
\label{tab_academic}
\end{table}

\subsection{Ablation Study}
\begin{figure}[h]
    \centering
    \begin{minipage}[b]{0.32\linewidth}
        \centering
        \includegraphics[width=\linewidth]{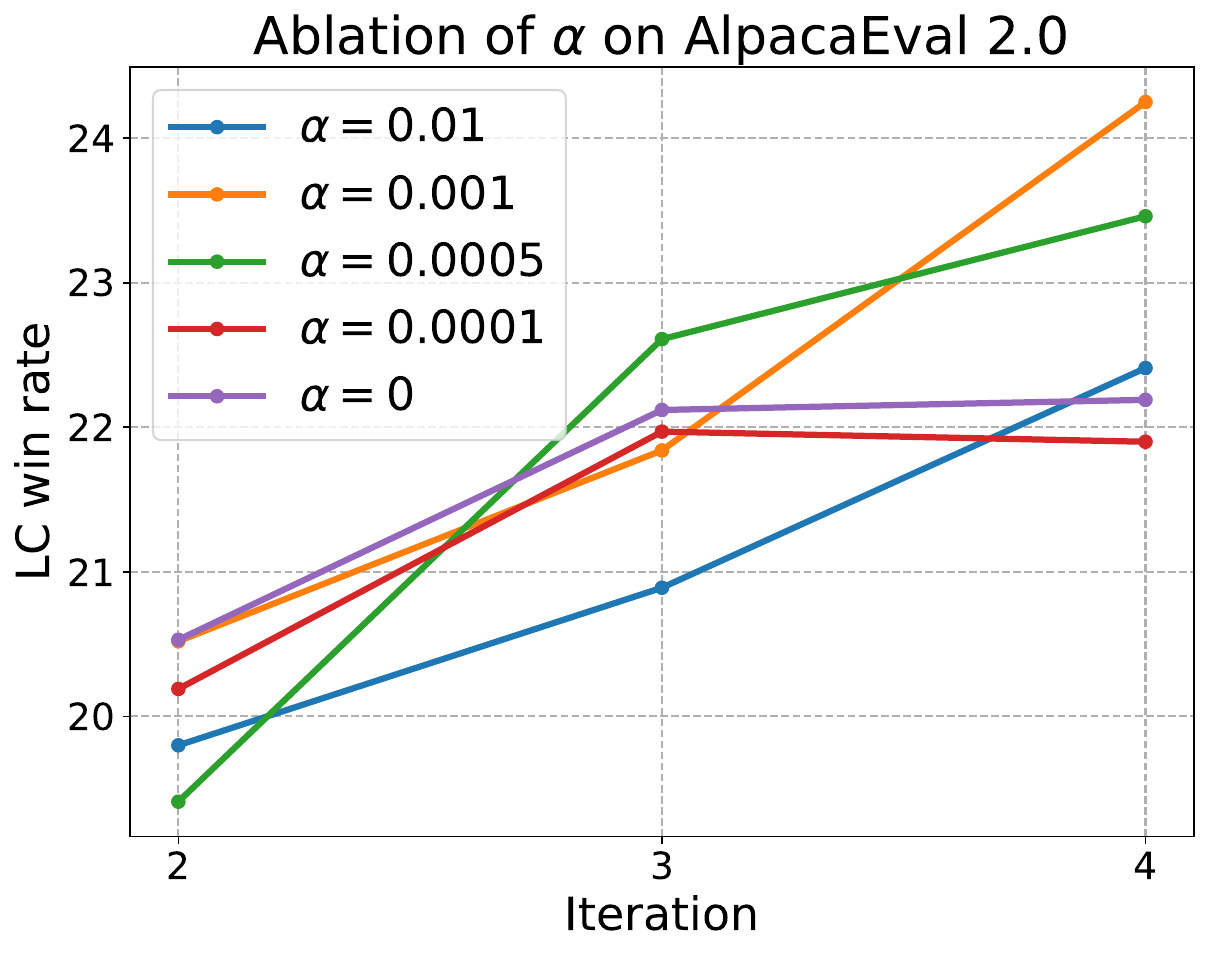}
    \end{minipage}
    \begin{minipage}[b]{0.32\linewidth}
        \centering
        \includegraphics[width=\linewidth]{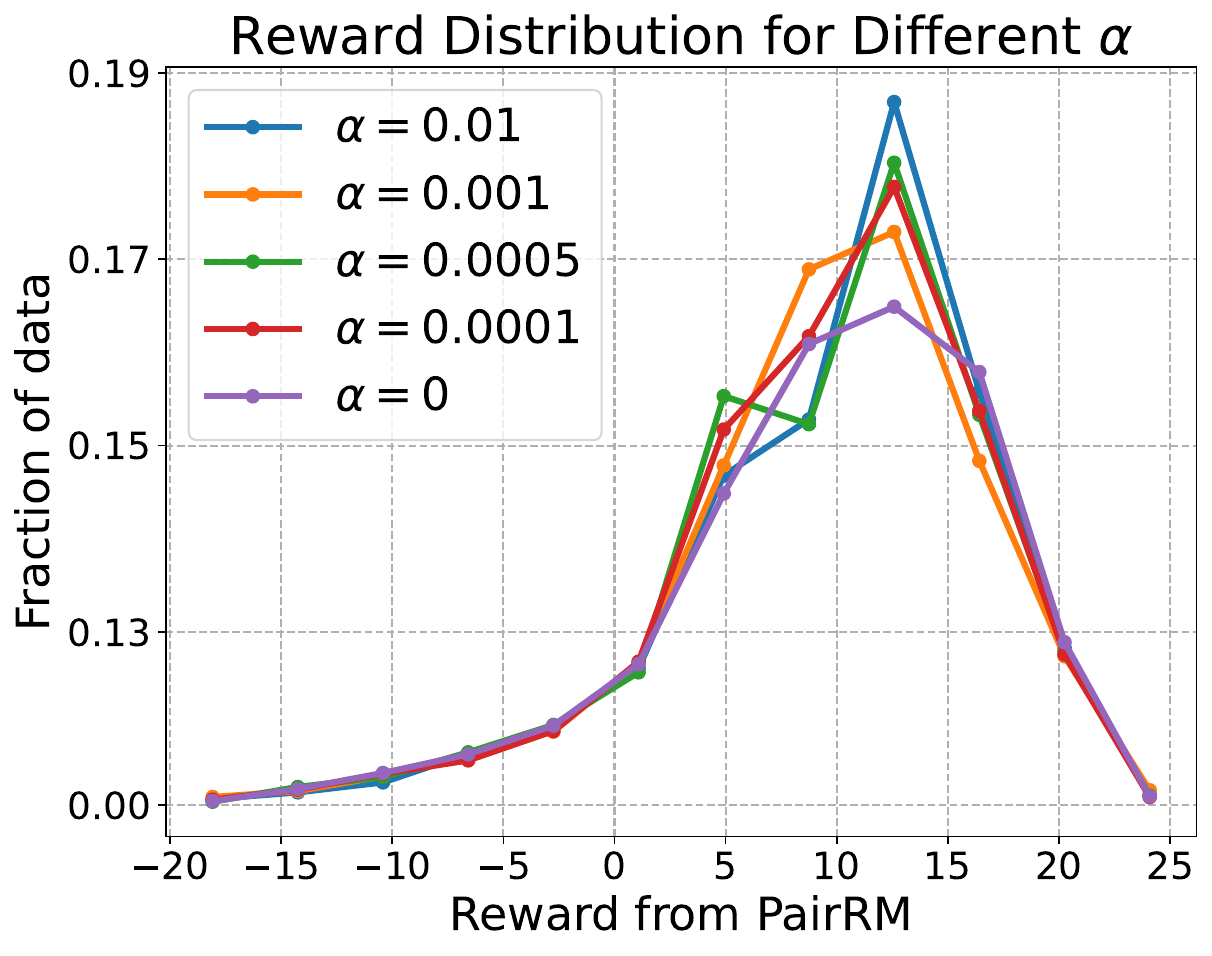}
    \end{minipage}
    \begin{minipage}[b]{0.32\linewidth}
        \centering
        \includegraphics[width=\linewidth]{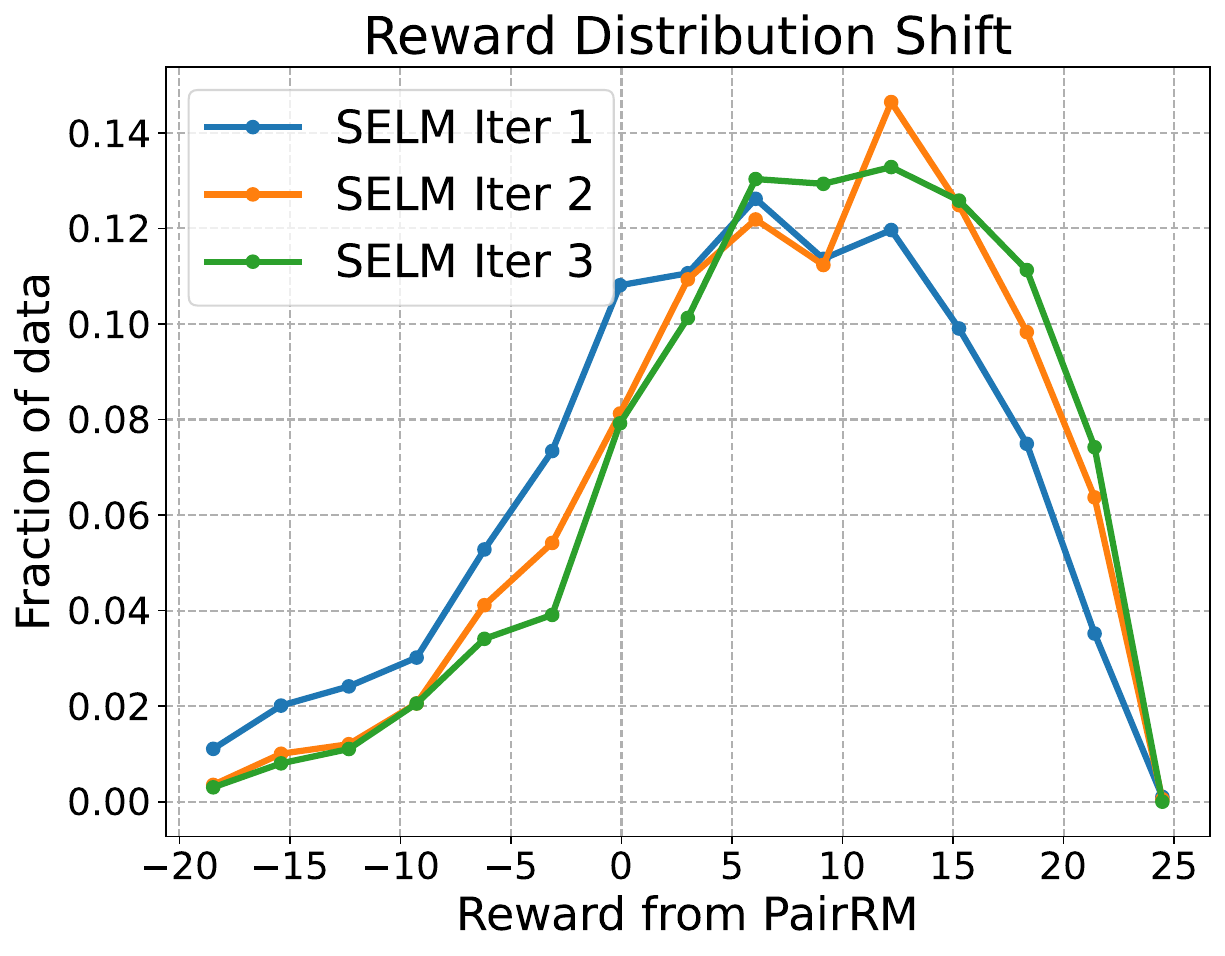}
    \end{minipage}
\vspace{-0.1cm}
\caption{Ablation on the optimism coefficient $\alpha$ and the change of the reward distribution. \textbf{Left:} The length-controlled win rates of SELM with different $\alpha$ on AlpacaEval 2.0. \textbf{Middle:} Comparison of reward distributions at iteration 2 with different $\alpha$. \textbf{Right:} SELM initially explores and then shifts to higher-reward regions as more training iterations are performed.}
\label{fig_alpha}
\end{figure}

We first provide ablation studies to better understand the explorative optimism term.
We begin by investigating the effect of the optimism coefficient $\alpha$. In Figure \ref{fig_alpha} (Left), we plot the LC win rates of SELM when using Zephyr-7B-SFT as the base model for different $\alpha$ in the AlpacaEval 2.0 benchmark. We find that setting a small $\alpha$, such as $0.0001$, leads to very similar behaviors to the iterative DPO ($\alpha=0$) baseline, while SELM with a large $\alpha$ may become overly optimistic and thus not very effective. These results meet our expectations, suggesting that proper values of $\alpha$ are essential for achieving the best trade-off between exploration and exploitation.

Next, we study the difference in reward distributions with varied $\alpha$ and iterations. Specifically, for prompts from the 2k test set of UltraFeedback, we greedily sample from the LLM and generate rewards for the responses with PairRM. We then calculate the fraction of data that lies in each partition of rewards. The results for different $\alpha$ values of SELM Iter 2 (Zephyr) in Figure \ref{fig_alpha} (Middle) indicates that increasing $\alpha$ results in distributions that are concentrated in higher-reward regions. 

\begin{figure}[h!]
    \centering
    \begin{minipage}[t]{0.48\textwidth}
        \centering
        \includegraphics[width=\linewidth]{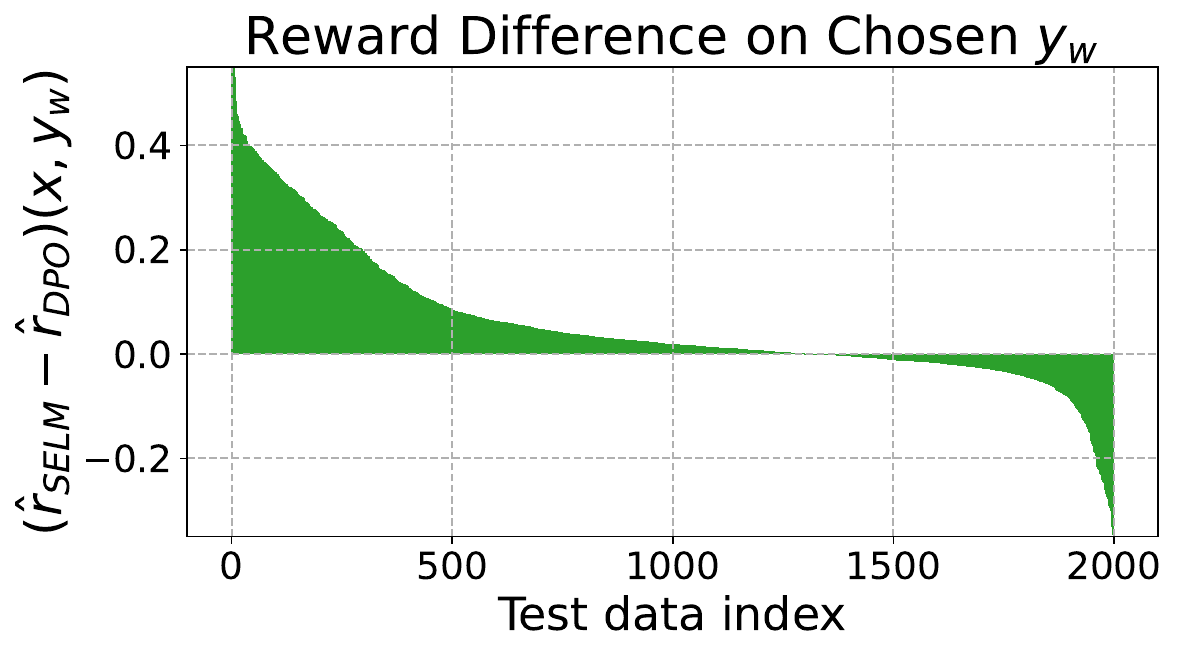}
        \label{fig_ir_chosen}
    \end{minipage}
    \hfill
    \begin{minipage}[t]{0.48\textwidth}
        \centering
        \includegraphics[width=\linewidth]{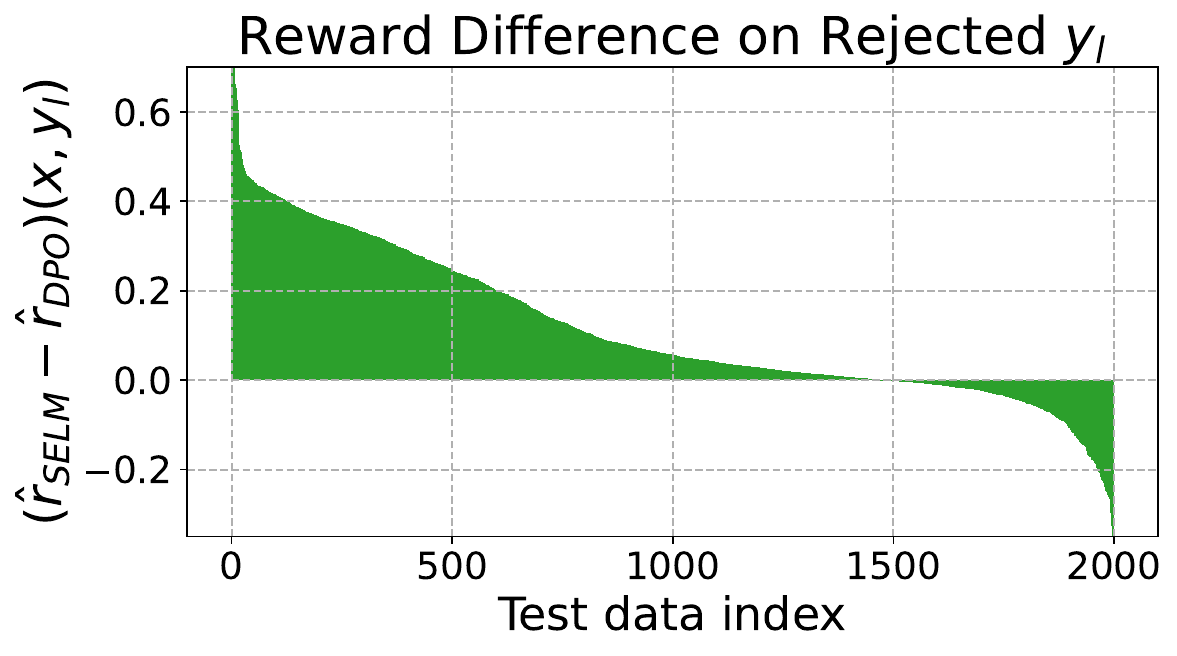}
        \label{fig_ir_rejected}
    \end{minipage}
    \vspace{-0.4cm}
    \caption{Difference of implicit reward between SELM and DPO on the chosen and rejected responses. SELM assigns a higher implicit reward than DPO for both responses.}
    \label{fig_ir}
\end{figure}

Additionally, Figure \ref{fig_alpha} (Right) demonstrates that the reward distribution shifts to the right (higher) as more training iterations are performed. This shift corresponds to an initial exploration phase, where the LLM generates uncertain responses of varying quality, followed by an exploitation phase as feedback is incorporated and more training data is collected.

We also conduct ablation studies on the implicit reward captured by the SELM and DPO models. Recall that for both SELM and DPO, the implicit reward takes the form of $\hat{r}_\theta(x, y) = \beta(\log\pi_\theta(y\mid x) - \log\pi_{\text{ref}}(y\mid x))$. We calculate the reward difference $\hat{r}_{\text{SELM}}(x, y) - \hat{r}_{\text{DPO}}(x, y)$ for each prompt $x$ in the UltraFeedback holdout test set. Here, we study the implicit reward of the good (chosen) and bad (rejected) responses, so $y=y_w$ or $y=y_l$. We then sort the reward difference and plot the results for Zephyr-based models after iteration 1 in Figure \ref{fig_ir}. The plot clearly shows that for both chosen and rejected responses, SELM produces higher \textit{implicit} rewards compared to DPO, aligning with the proposed optimistically biased self-exploration objective.

\begin{wrapfigure}{r}{0.48\textwidth}
    \begin{minipage}[t]{0.48\textwidth}
        \centering
        \begin{minipage}[t]{\linewidth}
            \centering
            \includegraphics[width=\linewidth]{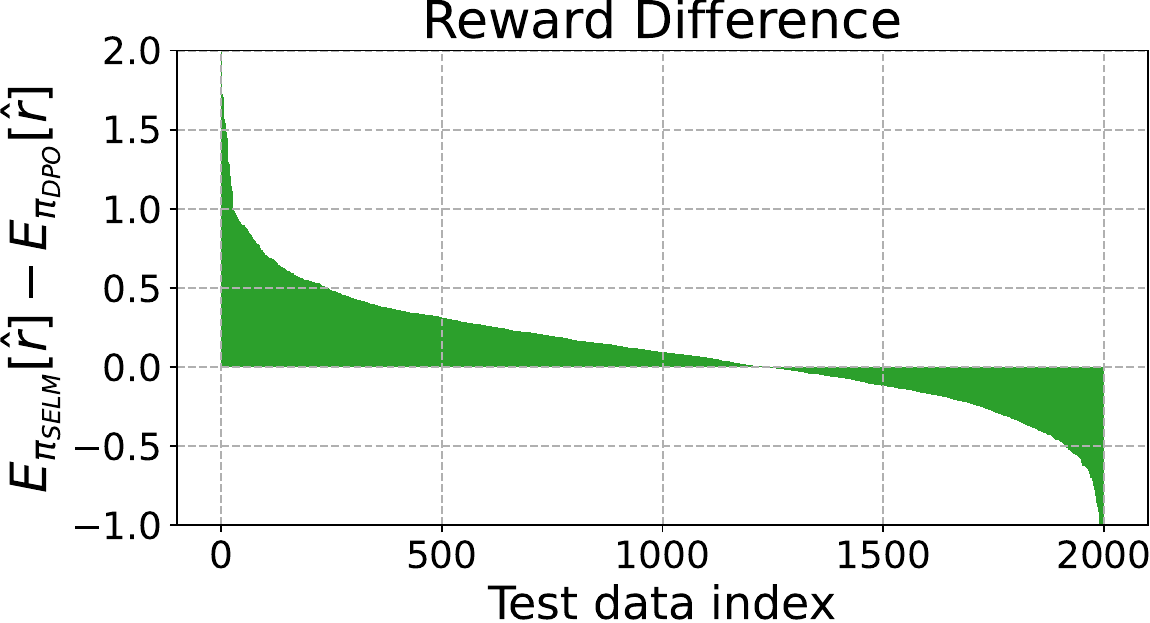}
        \end{minipage}
        \vspace{-0.65cm}
        \setlength{\belowcaptionskip}{-15pt}
        \caption{SELM actively explores by favoring high-reward responses.}
        \label{fig_explore}
    \end{minipage}
\end{wrapfigure}
In Section \ref{analysis}, we show that SELM engages in more active exploration by prioritizing high-reward responses compared to DPO, which indiscriminately favors unseen extrapolations and explores passively. To validate this, we sample three responses from SELM and DPO Iter 2 (Zephyr) for each prompt and we calculate the subtraction of the mean implicit rewards. As illustrated in Figure \ref{fig_explore}, SELM consistently achieves higher implicit rewards across most prompts, with the positive reward differences being notably larger in magnitude, supporting our claim regarding SELM’s active exploration behavior.
\section{Conclusion \& Future Work}
\label{conclusion}
In this paper, we introduced an active preference elicitation method for the online alignment of large language models. By incorporating an optimism term into the reward-fitting objective, the proposed bilevel self-exploring objective effectively balances between exploiting observed data and exploring potentially high-reward regions. Unlike standard online RLHF algorithms that passively explore the response space by sampling from the training LLM, whose sole objective is maximizing the expected learned reward, our method actively seeks diverse and high-quality responses. This self-exploration mechanism helps mitigate the risk of premature convergence and overfitting when the reward model is only locally accurate. To optimize this bilevel objective, we solve the inner-level problem and reparameterize the reward with the LLM policy, resulting in a simple yet novel iterative alignment algorithm called \textit{Self-Exploring Language Models} (SELM). Compared to DPO, SELM is provably sample-efficient and improves the exploration efficiency by selectively favoring responses with high potential rewards rather than indiscriminately sampling unseen responses.

Our experiments, conducted with Zephyr-7B-SFT and Llama-3-8B-Instruct models, demonstrate the efficacy of SELM with consistent improvements on AlpacaEval 2.0, MT-Bench, and academic benchmarks with minimal computational overhead. These results underscore the ability of SELM to enhance the alignment and capabilities of LLMs by promoting more diverse and high-quality responses. Since the proposed technique is orthogonal to the adopted online RLHF workflow, it will be interesting to apply our method within more sophisticated alignment frameworks with advanced designs, which we would like to leave as future work.

\bibliographystyle{ims}
\bibliography{reference.bib}

\begin{thebibliography}{85}
\expandafter\ifx\csname natexlab\endcsname\relax\def\natexlab#1{#1}\fi
\expandafter\ifx\csname url\endcsname\relax
  \def\url#1{\texttt{#1}}\fi
\expandafter\ifx\csname urlprefix\endcsname\relax\def\urlprefix{}\fi

\bibitem[{Abbasi-Yadkori et~al.(2011)Abbasi-Yadkori, P{\'a}l and Szepesv{\'a}ri}]{abbasi2011improved}
\text{Abbasi-Yadkori, Y.}, \text{P{\'a}l, D.} and \text{Szepesv{\'a}ri, C.} (2011).
\newblock Improved algorithms for linear stochastic bandits.
\newblock \textit{Advances in neural information processing systems}, \textbf{24}.

\bibitem[{Abdin et~al.(2024)Abdin, Jacobs, Awan, Aneja, Awadallah, Awadalla, Bach, Bahree, Bakhtiari, Behl et~al.}]{abdin2024phi}
\text{Abdin, M.}, \text{Jacobs, S.~A.}, \text{Awan, A.~A.}, \text{Aneja, J.}, \text{Awadallah, A.}, \text{Awadalla, H.}, \text{Bach, N.}, \text{Bahree, A.}, \text{Bakhtiari, A.}, \text{Behl, H.} \text{et~al.} (2024).
\newblock Phi-3 technical report: A highly capable language model locally on your phone.
\newblock \textit{arXiv preprint arXiv:2404.14219}.

\bibitem[{Agarwal et~al.(2020)Agarwal, Kakade, Krishnamurthy and Sun}]{agarwal2020flambe}
\text{Agarwal, A.}, \text{Kakade, S.}, \text{Krishnamurthy, A.} and \text{Sun, W.} (2020).
\newblock Flambe: Structural complexity and representation learning of low rank mdps.
\newblock \textit{Advances in neural information processing systems}, \textbf{33} 20095--20107.

\bibitem[{Askell et~al.(2021)Askell, Bai, Chen, Drain, Ganguli, Henighan, Jones, Joseph, Mann, DasSarma et~al.}]{askell2021general}
\text{Askell, A.}, \text{Bai, Y.}, \text{Chen, A.}, \text{Drain, D.}, \text{Ganguli, D.}, \text{Henighan, T.}, \text{Jones, A.}, \text{Joseph, N.}, \text{Mann, B.}, \text{DasSarma, N.} \text{et~al.} (2021).
\newblock A general language assistant as a laboratory for alignment.
\newblock \textit{arXiv preprint arXiv:2112.00861}.

\bibitem[{Auer(2002)}]{auer2002using}
\text{Auer, P.} (2002).
\newblock Using confidence bounds for exploitation-exploration trade-offs.
\newblock \textit{Journal of Machine Learning Research}, \textbf{3} 397--422.

\bibitem[{Azar et~al.(2023)Azar, Rowland, Piot, Guo, Calandriello, Valko and Munos}]{azar2023general}
\text{Azar, M.~G.}, \text{Rowland, M.}, \text{Piot, B.}, \text{Guo, D.}, \text{Calandriello, D.}, \text{Valko, M.} and \text{Munos, R.} (2023).
\newblock A general theoretical paradigm to understand learning from human preferences.
\newblock \textit{arXiv preprint arXiv:2310.12036}.

\bibitem[{Bai et~al.(2022)Bai, Jones, Ndousse, Askell, Chen, DasSarma, Drain, Fort, Ganguli, Henighan et~al.}]{bai2022training}
\text{Bai, Y.}, \text{Jones, A.}, \text{Ndousse, K.}, \text{Askell, A.}, \text{Chen, A.}, \text{DasSarma, N.}, \text{Drain, D.}, \text{Fort, S.}, \text{Ganguli, D.}, \text{Henighan, T.} \text{et~al.} (2022).
\newblock Training a helpful and harmless assistant with reinforcement learning from human feedback.
\newblock \textit{arXiv preprint arXiv:2204.05862}.

\bibitem[{Calandriello et~al.(2024)Calandriello, Guo, Munos, Rowland, Tang, Pires, Richemond, Lan, Valko, Liu et~al.}]{calandriello2024human}
\text{Calandriello, D.}, \text{Guo, D.}, \text{Munos, R.}, \text{Rowland, M.}, \text{Tang, Y.}, \text{Pires, B.~A.}, \text{Richemond, P.~H.}, \text{Lan, C.~L.}, \text{Valko, M.}, \text{Liu, T.} \text{et~al.} (2024).
\newblock Human alignment of large language models through online preference optimisation.
\newblock \textit{arXiv preprint arXiv:2403.08635}.

\bibitem[{Cen et~al.(2024)Cen, Mei, Goshvadi, Dai, Yang, Yang, Schuurmans, Chi and Dai}]{cen2024value}
\text{Cen, S.}, \text{Mei, J.}, \text{Goshvadi, K.}, \text{Dai, H.}, \text{Yang, T.}, \text{Yang, S.}, \text{Schuurmans, D.}, \text{Chi, Y.} and \text{Dai, B.} (2024).
\newblock Value-incentivized preference optimization: A unified approach to online and offline rlhf.
\newblock \textit{arXiv preprint arXiv:2405.19320}.

\bibitem[{Chen et~al.(2022)Chen, Zhong, Yang, Wang and Wang}]{chen2022human}
\text{Chen, X.}, \text{Zhong, H.}, \text{Yang, Z.}, \text{Wang, Z.} and \text{Wang, L.} (2022).
\newblock Human-in-the-loop: Provably efficient preference-based reinforcement learning with general function approximation.
\newblock In \textit{International Conference on Machine Learning}. PMLR.

\bibitem[{Chen et~al.(2024)Chen, Deng, Yuan, Ji and Gu}]{chen2024self}
\text{Chen, Z.}, \text{Deng, Y.}, \text{Yuan, H.}, \text{Ji, K.} and \text{Gu, Q.} (2024).
\newblock Self-play fine-tuning converts weak language models to strong language models.
\newblock \textit{arXiv preprint arXiv:2401.01335}.

\bibitem[{Chua et~al.(2018)Chua, Calandra, McAllister and Levine}]{chua2018deep}
\text{Chua, K.}, \text{Calandra, R.}, \text{McAllister, R.} and \text{Levine, S.} (2018).
\newblock Deep reinforcement learning in a handful of trials using probabilistic dynamics models.
\newblock \textit{Advances in neural information processing systems}, \textbf{31}.

\bibitem[{Clark et~al.(2018)Clark, Cowhey, Etzioni, Khot, Sabharwal, Schoenick and Tafjord}]{clark2018think}
\text{Clark, P.}, \text{Cowhey, I.}, \text{Etzioni, O.}, \text{Khot, T.}, \text{Sabharwal, A.}, \text{Schoenick, C.} and \text{Tafjord, O.} (2018).
\newblock Think you have solved question answering? try arc, the ai2 reasoning challenge.
\newblock \textit{arXiv preprint arXiv:1803.05457}.

\bibitem[{Cobbe et~al.(2021)Cobbe, Kosaraju, Bavarian, Chen, Jun, Kaiser, Plappert, Tworek, Hilton, Nakano et~al.}]{cobbe2021training}
\text{Cobbe, K.}, \text{Kosaraju, V.}, \text{Bavarian, M.}, \text{Chen, M.}, \text{Jun, H.}, \text{Kaiser, L.}, \text{Plappert, M.}, \text{Tworek, J.}, \text{Hilton, J.}, \text{Nakano, R.} \text{et~al.} (2021).
\newblock Training verifiers to solve math word problems.
\newblock \textit{arXiv preprint arXiv:2110.14168}.

\bibitem[{Cui et~al.(2023)Cui, Yuan, Ding, Yao, Zhu, Ni, Xie, Liu and Sun}]{cui2023ultrafeedback}
\text{Cui, G.}, \text{Yuan, L.}, \text{Ding, N.}, \text{Yao, G.}, \text{Zhu, W.}, \text{Ni, Y.}, \text{Xie, G.}, \text{Liu, Z.} and \text{Sun, M.} (2023).
\newblock Ultrafeedback: Boosting language models with high-quality feedback.
\newblock \textit{arXiv preprint arXiv:2310.01377}.

\bibitem[{Das et~al.(2024)Das, Chakraborty, Pacchiano and Chowdhury}]{das2024provably}
\text{Das, N.}, \text{Chakraborty, S.}, \text{Pacchiano, A.} and \text{Chowdhury, S.~R.} (2024).
\newblock Provably sample efficient rlhf via active preference optimization.
\newblock \textit{arXiv preprint arXiv:2402.10500}.

\bibitem[{Ding et~al.(2023)Ding, Chen, Xu, Qin, Zheng, Hu, Liu, Sun and Zhou}]{ding2023enhancing}
\text{Ding, N.}, \text{Chen, Y.}, \text{Xu, B.}, \text{Qin, Y.}, \text{Zheng, Z.}, \text{Hu, S.}, \text{Liu, Z.}, \text{Sun, M.} and \text{Zhou, B.} (2023).
\newblock Enhancing chat language models by scaling high-quality instructional conversations.
\newblock \textit{arXiv preprint arXiv:2305.14233}.

\bibitem[{Dong et~al.(2024)Dong, Xiong, Pang, Wang, Zhao, Zhou, Jiang, Sahoo, Xiong and Zhang}]{dong2024rlhf}
\text{Dong, H.}, \text{Xiong, W.}, \text{Pang, B.}, \text{Wang, H.}, \text{Zhao, H.}, \text{Zhou, Y.}, \text{Jiang, N.}, \text{Sahoo, D.}, \text{Xiong, C.} and \text{Zhang, T.} (2024).
\newblock Rlhf workflow: From reward modeling to online rlhf.
\newblock \textit{arXiv e-prints} arXiv--2405.

\bibitem[{Dubois et~al.(2024)Dubois, Galambosi, Liang and Hashimoto}]{dubois2024length}
\text{Dubois, Y.}, \text{Galambosi, B.}, \text{Liang, P.} and \text{Hashimoto, T.~B.} (2024).
\newblock Length-controlled alpacaeval: A simple way to debias automatic evaluators.
\newblock \textit{arXiv preprint arXiv:2404.04475}.

\bibitem[{Dwaracherla et~al.(2024)Dwaracherla, Asghari, Hao and Van~Roy}]{dwaracherla2024efficient}
\text{Dwaracherla, V.}, \text{Asghari, S.~M.}, \text{Hao, B.} and \text{Van~Roy, B.} (2024).
\newblock Efficient exploration for llms.
\newblock \textit{arXiv preprint arXiv:2402.00396}.

\bibitem[{Elo and Sloan(1978)}]{elo1978rating}
\text{Elo, A.~E.} and \text{Sloan, S.} (1978).
\newblock The rating of chessplayers: Past and present.
\newblock \textit{Ishi Press International}.

\bibitem[{Ethayarajh et~al.(2024)Ethayarajh, Xu, Muennighoff, Jurafsky and Kiela}]{ethayarajh2024kto}
\text{Ethayarajh, K.}, \text{Xu, W.}, \text{Muennighoff, N.}, \text{Jurafsky, D.} and \text{Kiela, D.} (2024).
\newblock Kto: Model alignment as prospect theoretic optimization.
\newblock \textit{arXiv preprint arXiv:2402.01306}.

\bibitem[{Gao et~al.(2023)Gao, Schulman and Hilton}]{gao2023scaling}
\text{Gao, L.}, \text{Schulman, J.} and \text{Hilton, J.} (2023).
\newblock Scaling laws for reward model overoptimization.
\newblock In \textit{International Conference on Machine Learning}. PMLR.

\bibitem[{Gulcehre et~al.(2023)Gulcehre, Paine, Srinivasan, Konyushkova, Weerts, Sharma, Siddhant, Ahern, Wang, Gu et~al.}]{gulcehre2023reinforced}
\text{Gulcehre, C.}, \text{Paine, T.~L.}, \text{Srinivasan, S.}, \text{Konyushkova, K.}, \text{Weerts, L.}, \text{Sharma, A.}, \text{Siddhant, A.}, \text{Ahern, A.}, \text{Wang, M.}, \text{Gu, C.} \text{et~al.} (2023).
\newblock Reinforced self-training (rest) for language modeling.
\newblock \textit{arXiv preprint arXiv:2308.08998}.

\bibitem[{Gunasekar et~al.(2023)Gunasekar, Zhang, Aneja, Mendes, Del~Giorno, Gopi, Javaheripi, Kauffmann, de~Rosa, Saarikivi et~al.}]{gunasekar2023textbooks}
\text{Gunasekar, S.}, \text{Zhang, Y.}, \text{Aneja, J.}, \text{Mendes, C. C.~T.}, \text{Del~Giorno, A.}, \text{Gopi, S.}, \text{Javaheripi, M.}, \text{Kauffmann, P.}, \text{de~Rosa, G.}, \text{Saarikivi, O.} \text{et~al.} (2023).
\newblock Textbooks are all you need.
\newblock \textit{arXiv preprint arXiv:2306.11644}.

\bibitem[{Guo et~al.(2024)Guo, Zhang, Liu, Liu, Khalman, Llinares, Rame, Mesnard, Zhao, Piot et~al.}]{guo2024direct}
\text{Guo, S.}, \text{Zhang, B.}, \text{Liu, T.}, \text{Liu, T.}, \text{Khalman, M.}, \text{Llinares, F.}, \text{Rame, A.}, \text{Mesnard, T.}, \text{Zhao, Y.}, \text{Piot, B.} \text{et~al.} (2024).
\newblock Direct language model alignment from online ai feedback.
\newblock \textit{arXiv preprint arXiv:2402.04792}.

\bibitem[{Hoang~Tran(2024)}]{snorkelaipair}
\text{Hoang~Tran, B.~H., Chris~Glaze} (2024).
\newblock Snorkel-mistral-pairrm-dpo.
\newline\urlprefix\url{https://huggingface.co/snorkelai/Snorkel-Mistral-PairRM-DPO}

\bibitem[{Hu et~al.(2024)Hu, Wu, Wang, Xianyu, Zhang and Cao}]{hu2024openrlhf}
\text{Hu, J.}, \text{Wu, X.}, \text{Wang, W.}, \text{Xianyu}, \text{Zhang, D.} and \text{Cao, Y.} (2024).
\newblock Openrlhf: An easy-to-use, scalable and high-performance rlhf framework.

\bibitem[{Ivison et~al.(2023)Ivison, Wang, Pyatkin, Lambert, Peters, Dasigi, Jang, Wadden, Smith, Beltagy et~al.}]{ivison2023camels}
\text{Ivison, H.}, \text{Wang, Y.}, \text{Pyatkin, V.}, \text{Lambert, N.}, \text{Peters, M.}, \text{Dasigi, P.}, \text{Jang, J.}, \text{Wadden, D.}, \text{Smith, N.~A.}, \text{Beltagy, I.} \text{et~al.} (2023).
\newblock Camels in a changing climate: Enhancing lm adaptation with tulu 2.
\newblock \textit{arXiv preprint arXiv:2311.10702}.

\bibitem[{Jiang et~al.(2023)Jiang, Ren and Lin}]{jiang2023llm}
\text{Jiang, D.}, \text{Ren, X.} and \text{Lin, B.~Y.} (2023).
\newblock Llm-blender: Ensembling large language models with pairwise ranking and generative fusion.
\newblock \textit{arXiv preprint arXiv:2306.02561}.

\bibitem[{Jin et~al.(2020)Jin, Yang, Wang and Jordan}]{jin2020provably}
\text{Jin, C.}, \text{Yang, Z.}, \text{Wang, Z.} and \text{Jordan, M.~I.} (2020).
\newblock Provably efficient reinforcement learning with linear function approximation.
\newblock In \textit{Conference on learning theory}. PMLR.

\bibitem[{Josifoski et~al.(2023)Josifoski, Sakota, Peyrard and West}]{josifoski2023exploiting}
\text{Josifoski, M.}, \text{Sakota, M.}, \text{Peyrard, M.} and \text{West, R.} (2023).
\newblock Exploiting asymmetry for synthetic training data generation: Synthie and the case of information extraction.
\newblock \textit{arXiv preprint arXiv:2303.04132}.

\bibitem[{Kim et~al.(2024)Kim, Kim, Song, Kim, Kim, Kim and Park}]{kim2024sdpo}
\text{Kim, D.}, \text{Kim, Y.}, \text{Song, W.}, \text{Kim, H.}, \text{Kim, Y.}, \text{Kim, S.} and \text{Park, C.} (2024).
\newblock sdpo: Don't use your data all at once.
\newblock \textit{arXiv preprint arXiv:2403.19270}.

\bibitem[{K{\"o}pf et~al.(2024)K{\"o}pf, Kilcher, von R{\"u}tte, Anagnostidis, Tam, Stevens, Barhoum, Nguyen, Stanley, Nagyfi et~al.}]{kopf2024openassistant}
\text{K{\"o}pf, A.}, \text{Kilcher, Y.}, \text{von R{\"u}tte, D.}, \text{Anagnostidis, S.}, \text{Tam, Z.~R.}, \text{Stevens, K.}, \text{Barhoum, A.}, \text{Nguyen, D.}, \text{Stanley, O.}, \text{Nagyfi, R.} \text{et~al.} (2024).
\newblock Openassistant conversations-democratizing large language model alignment.
\newblock \textit{Advances in Neural Information Processing Systems}, \textbf{36}.

\bibitem[{Li et~al.(2024{\natexlab{a}})Li, Zeng, Wai, Li, Garcia and Hong}]{li2024getting}
\text{Li, J.}, \text{Zeng, S.}, \text{Wai, H.-T.}, \text{Li, C.}, \text{Garcia, A.} and \text{Hong, M.} (2024{\natexlab{a}}).
\newblock Getting more juice out of the sft data: Reward learning from human demonstration improves sft for llm alignment.
\newblock \textit{arXiv preprint arXiv:2405.17888}.

\bibitem[{Li et~al.(2024{\natexlab{b}})Li, Lin and Pei}]{li2024multi}
\text{Li, S.}, \text{Lin, R.} and \text{Pei, S.} (2024{\natexlab{b}}).
\newblock Multi-modal preference alignment remedies regression of visual instruction tuning on language model.
\newblock \textit{arXiv preprint arXiv:2402.10884}.

\bibitem[{Li et~al.(2023{\natexlab{a}})Li, Yu, Zhou, Schick, Zettlemoyer, Levy, Weston and Lewis}]{li2023self}
\text{Li, X.}, \text{Yu, P.}, \text{Zhou, C.}, \text{Schick, T.}, \text{Zettlemoyer, L.}, \text{Levy, O.}, \text{Weston, J.} and \text{Lewis, M.} (2023{\natexlab{a}}).
\newblock Self-alignment with instruction backtranslation.
\newblock \textit{arXiv preprint arXiv:2308.06259}.

\bibitem[{Li et~al.(2023{\natexlab{b}})Li, Bubeck, Eldan, Del~Giorno, Gunasekar and Lee}]{li2023textbooks}
\text{Li, Y.}, \text{Bubeck, S.}, \text{Eldan, R.}, \text{Del~Giorno, A.}, \text{Gunasekar, S.} and \text{Lee, Y.~T.} (2023{\natexlab{b}}).
\newblock Textbooks are all you need ii: phi-1.5 technical report.
\newblock \textit{arXiv preprint arXiv:2309.05463}.

\bibitem[{Li et~al.(2024{\natexlab{c}})Li, Xu, Han and Luo}]{li2024hyperagent}
\text{Li, Y.}, \text{Xu, J.}, \text{Han, L.} and \text{Luo, Z.-Q.} (2024{\natexlab{c}}).
\newblock Hyperagent: A simple, scalable, efficient and provable reinforcement learning framework for complex environments.
\newblock \textit{arXiv preprint arXiv:2402.10228}.

\bibitem[{Lin et~al.(2021)Lin, Hilton and Evans}]{lin2021truthfulqa}
\text{Lin, S.}, \text{Hilton, J.} and \text{Evans, O.} (2021).
\newblock Truthfulqa: Measuring how models mimic human falsehoods.
\newblock \textit{arXiv preprint arXiv:2109.07958}.

\bibitem[{Liu et~al.(2024{\natexlab{a}})Liu, Wei, Liu, Si, Zhang, Rao, Zheng, Peng, Yang, Zhou and Dai}]{liu2024best}
\text{Liu, R.}, \text{Wei, J.}, \text{Liu, F.}, \text{Si, C.}, \text{Zhang, Y.}, \text{Rao, J.}, \text{Zheng, S.}, \text{Peng, D.}, \text{Yang, D.}, \text{Zhou, D.} and \text{Dai, A.~M.} (2024{\natexlab{a}}).
\newblock Best practices and lessons learned on synthetic data for language models.

\bibitem[{Liu et~al.(2024{\natexlab{b}})Liu, Lu, Xiong, Zhong, Hu, Zhang, Zheng, Yang and Wang}]{liu2024maximize}
\text{Liu, Z.}, \text{Lu, M.}, \text{Xiong, W.}, \text{Zhong, H.}, \text{Hu, H.}, \text{Zhang, S.}, \text{Zheng, S.}, \text{Yang, Z.} and \text{Wang, Z.} (2024{\natexlab{b}}).
\newblock Maximize to explore: One objective function fusing estimation, planning, and exploration.
\newblock \textit{Advances in Neural Information Processing Systems}, \textbf{36}.

\bibitem[{Liu et~al.(2024{\natexlab{c}})Liu, Lu, Zhang, Liu, Guo, Yang, Blanchet and Wang}]{liu2024provably}
\text{Liu, Z.}, \text{Lu, M.}, \text{Zhang, S.}, \text{Liu, B.}, \text{Guo, H.}, \text{Yang, Y.}, \text{Blanchet, J.} and \text{Wang, Z.} (2024{\natexlab{c}}).
\newblock Provably mitigating overoptimization in rlhf: Your sft loss is implicitly an adversarial regularizer.
\newblock \textit{arXiv preprint arXiv:2405.16436}.

\bibitem[{Lu and Van~Roy(2017)}]{lu2017ensemble}
\text{Lu, X.} and \text{Van~Roy, B.} (2017).
\newblock Ensemble sampling.
\newblock \textit{Advances in neural information processing systems}, \textbf{30}.

\bibitem[{Mehta et~al.(2023)Mehta, Das, Neopane, Dai, Bogunovic, Schneider and Neiswanger}]{mehta2023sample}
\text{Mehta, V.}, \text{Das, V.}, \text{Neopane, O.}, \text{Dai, Y.}, \text{Bogunovic, I.}, \text{Schneider, J.} and \text{Neiswanger, W.} (2023).
\newblock Sample efficient reinforcement learning from human feedback via active exploration.

\bibitem[{Meta(2024)}]{llama3}
\text{Meta} (2024).
\newblock Introducing meta llama 3: The most capable openly available llm to date.
\newline\urlprefix\url{https://ai.meta.com/blog/meta-llama-3/}

\bibitem[{Mihaylov et~al.(2018)Mihaylov, Clark, Khot and Sabharwal}]{mihaylov2018can}
\text{Mihaylov, T.}, \text{Clark, P.}, \text{Khot, T.} and \text{Sabharwal, A.} (2018).
\newblock Can a suit of armor conduct electricity? a new dataset for open book question answering.
\newblock \textit{arXiv preprint arXiv:1809.02789}.

\bibitem[{Noukhovitch et~al.(2024)Noukhovitch, Lavoie, Strub and Courville}]{noukhovitch2024language}
\text{Noukhovitch, M.}, \text{Lavoie, S.}, \text{Strub, F.} and \text{Courville, A.~C.} (2024).
\newblock Language model alignment with elastic reset.
\newblock \textit{Advances in Neural Information Processing Systems}, \textbf{36}.

\bibitem[{Osband et~al.(2013)Osband, Russo and Van~Roy}]{osband2013more}
\text{Osband, I.}, \text{Russo, D.} and \text{Van~Roy, B.} (2013).
\newblock (more) efficient reinforcement learning via posterior sampling.
\newblock \textit{Advances in Neural Information Processing Systems}, \textbf{26}.

\bibitem[{Osband et~al.(2023)Osband, Wen, Asghari, Dwaracherla, Ibrahimi, Lu and Van~Roy}]{osband2023approximate}
\text{Osband, I.}, \text{Wen, Z.}, \text{Asghari, S.~M.}, \text{Dwaracherla, V.}, \text{Ibrahimi, M.}, \text{Lu, X.} and \text{Van~Roy, B.} (2023).
\newblock Approximate thompson sampling via epistemic neural networks.
\newblock In \textit{Uncertainty in Artificial Intelligence}. PMLR.

\bibitem[{Osband et~al.(2024)Osband, Wen, Asghari, Dwaracherla, Ibrahimi, Lu and Van~Roy}]{osband2024epistemic}
\text{Osband, I.}, \text{Wen, Z.}, \text{Asghari, S.~M.}, \text{Dwaracherla, V.}, \text{Ibrahimi, M.}, \text{Lu, X.} and \text{Van~Roy, B.} (2024).
\newblock Epistemic neural networks.
\newblock \textit{Advances in Neural Information Processing Systems}, \textbf{36}.

\bibitem[{Ouyang et~al.(2022)Ouyang, Wu, Jiang, Almeida, Wainwright, Mishkin, Zhang, Agarwal, Slama, Ray et~al.}]{ouyang2022training}
\text{Ouyang, L.}, \text{Wu, J.}, \text{Jiang, X.}, \text{Almeida, D.}, \text{Wainwright, C.}, \text{Mishkin, P.}, \text{Zhang, C.}, \text{Agarwal, S.}, \text{Slama, K.}, \text{Ray, A.} \text{et~al.} (2022).
\newblock Training language models to follow instructions with human feedback.
\newblock \textit{Advances in neural information processing systems}, \textbf{35} 27730--27744.

\bibitem[{Paech(2023)}]{paech2023eq}
\text{Paech, S.~J.} (2023).
\newblock Eq-bench: An emotional intelligence benchmark for large language models.
\newblock \textit{arXiv preprint arXiv:2312.06281}.

\bibitem[{Pal et~al.(2024)Pal, Karkhanis, Dooley, Roberts, Naidu and White}]{pal2024smaug}
\text{Pal, A.}, \text{Karkhanis, D.}, \text{Dooley, S.}, \text{Roberts, M.}, \text{Naidu, S.} and \text{White, C.} (2024).
\newblock Smaug: Fixing failure modes of preference optimisation with dpo-positive.
\newblock \textit{arXiv preprint arXiv:2402.13228}.

\bibitem[{Peng et~al.(2023)Peng, Li, He, Galley and Gao}]{peng2023instruction}
\text{Peng, B.}, \text{Li, C.}, \text{He, P.}, \text{Galley, M.} and \text{Gao, J.} (2023).
\newblock Instruction tuning with gpt-4.
\newblock \textit{arXiv preprint arXiv:2304.03277}.

\bibitem[{Rafailov et~al.(2024{\natexlab{a}})Rafailov, Hejna, Park and Finn}]{rafailov2024r}
\text{Rafailov, R.}, \text{Hejna, J.}, \text{Park, R.} and \text{Finn, C.} (2024{\natexlab{a}}).
\newblock From $ r $ to $ q^* $: Your language model is secretly a q-function.
\newblock \textit{arXiv preprint arXiv:2404.12358}.

\bibitem[{Rafailov et~al.(2024{\natexlab{b}})Rafailov, Sharma, Mitchell, Manning, Ermon and Finn}]{rafailov2024direct}
\text{Rafailov, R.}, \text{Sharma, A.}, \text{Mitchell, E.}, \text{Manning, C.~D.}, \text{Ermon, S.} and \text{Finn, C.} (2024{\natexlab{b}}).
\newblock Direct preference optimization: Your language model is secretly a reward model.
\newblock \textit{Advances in Neural Information Processing Systems}, \textbf{36}.

\bibitem[{Rosset et~al.(2024)Rosset, Cheng, Mitra, Santacroce, Awadallah and Xie}]{rosset2024direct}
\text{Rosset, C.}, \text{Cheng, C.-A.}, \text{Mitra, A.}, \text{Santacroce, M.}, \text{Awadallah, A.} and \text{Xie, T.} (2024).
\newblock Direct nash optimization: Teaching language models to self-improve with general preferences.
\newblock \textit{arXiv preprint arXiv:2404.03715}.

\bibitem[{Russo and Van~Roy(2013)}]{russo2013eluder}
\text{Russo, D.} and \text{Van~Roy, B.} (2013).
\newblock Eluder dimension and the sample complexity of optimistic exploration.
\newblock \textit{Advances in Neural Information Processing Systems}, \textbf{26}.

\bibitem[{Strens(2000)}]{strens2000bayesian}
\text{Strens, M.} (2000).
\newblock A bayesian framework for reinforcement learning.
\newblock In \textit{ICML}, vol. 2000.

\bibitem[{Sun et~al.(2024)Sun, Shen, Zhou, Zhang, Chen, Cox, Yang and Gan}]{sun2024principle}
\text{Sun, Z.}, \text{Shen, Y.}, \text{Zhou, Q.}, \text{Zhang, H.}, \text{Chen, Z.}, \text{Cox, D.}, \text{Yang, Y.} and \text{Gan, C.} (2024).
\newblock Principle-driven self-alignment of language models from scratch with minimal human supervision.
\newblock \textit{Advances in Neural Information Processing Systems}, \textbf{36}.

\bibitem[{Tang et~al.(2024)Tang, Guo, Zheng, Calandriello, Cao, Tarassov, Munos, Pires, Valko, Cheng et~al.}]{tang2024understanding}
\text{Tang, Y.}, \text{Guo, D.~Z.}, \text{Zheng, Z.}, \text{Calandriello, D.}, \text{Cao, Y.}, \text{Tarassov, E.}, \text{Munos, R.}, \text{Pires, B.~{\'A}.}, \text{Valko, M.}, \text{Cheng, Y.} \text{et~al.} (2024).
\newblock Understanding the performance gap between online and offline alignment algorithms.
\newblock \textit{arXiv preprint arXiv:2405.08448}.

\bibitem[{Taori et~al.(2023)Taori, Gulrajani, Zhang, Dubois, Li, Guestrin, Liang and Hashimoto}]{alpaca}
\text{Taori, R.}, \text{Gulrajani, I.}, \text{Zhang, T.}, \text{Dubois, Y.}, \text{Li, X.}, \text{Guestrin, C.}, \text{Liang, P.} and \text{Hashimoto, T.~B.} (2023).
\newblock Stanford alpaca: An instruction-following llama model.
\newblock \url{https://github.com/tatsu-lab/stanford_alpaca}.

\bibitem[{Tunstall et~al.(2023{\natexlab{a}})Tunstall, Beeching, Lambert, Rajani, Huang, Rasul, Rush and Wolf}]{alignment_handbook2023}
\text{Tunstall, L.}, \text{Beeching, E.}, \text{Lambert, N.}, \text{Rajani, N.}, \text{Huang, S.}, \text{Rasul, K.}, \text{Rush, A.~M.} and \text{Wolf, T.} (2023{\natexlab{a}}).
\newblock The alignment handbook.
\newblock \url{https://github.com/huggingface/alignment-handbook}.

\bibitem[{Tunstall et~al.(2023{\natexlab{b}})Tunstall, Beeching, Lambert, Rajani, Rasul, Belkada, Huang, von Werra, Fourrier, Habib et~al.}]{tunstall2023zephyr}
\text{Tunstall, L.}, \text{Beeching, E.}, \text{Lambert, N.}, \text{Rajani, N.}, \text{Rasul, K.}, \text{Belkada, Y.}, \text{Huang, S.}, \text{von Werra, L.}, \text{Fourrier, C.}, \text{Habib, N.} \text{et~al.} (2023{\natexlab{b}}).
\newblock Zephyr: Direct distillation of lm alignment.
\newblock \textit{arXiv preprint arXiv:2310.16944}.

\bibitem[{Wang et~al.(2024{\natexlab{a}})Wang, Chen, Wang, Zhou, Zhou, Yao, Zhou, Goldstein, Bhatia, Huang et~al.}]{wang2024enhancing}
\text{Wang, X.}, \text{Chen, J.}, \text{Wang, Z.}, \text{Zhou, Y.}, \text{Zhou, Y.}, \text{Yao, H.}, \text{Zhou, T.}, \text{Goldstein, T.}, \text{Bhatia, P.}, \text{Huang, F.} \text{et~al.} (2024{\natexlab{a}}).
\newblock Enhancing visual-language modality alignment in large vision language models via self-improvement.
\newblock \textit{arXiv preprint arXiv:2405.15973}.

\bibitem[{Wang et~al.(2024{\natexlab{b}})Wang, Ivison, Dasigi, Hessel, Khot, Chandu, Wadden, MacMillan, Smith, Beltagy et~al.}]{wang2024far}
\text{Wang, Y.}, \text{Ivison, H.}, \text{Dasigi, P.}, \text{Hessel, J.}, \text{Khot, T.}, \text{Chandu, K.}, \text{Wadden, D.}, \text{MacMillan, K.}, \text{Smith, N.~A.}, \text{Beltagy, I.} \text{et~al.} (2024{\natexlab{b}}).
\newblock How far can camels go? exploring the state of instruction tuning on open resources.
\newblock \textit{Advances in Neural Information Processing Systems}, \textbf{36}.

\bibitem[{Wang et~al.(2023)Wang, Liu and Jin}]{wang2023rlhf}
\text{Wang, Y.}, \text{Liu, Q.} and \text{Jin, C.} (2023).
\newblock Is rlhf more difficult than standard rl?
\newblock \textit{arXiv preprint arXiv:2306.14111}.

\bibitem[{Wu et~al.(2023)Wu, Lu, Xu, Lin, Su and Zhou}]{wu2023self}
\text{Wu, S.}, \text{Lu, K.}, \text{Xu, B.}, \text{Lin, J.}, \text{Su, Q.} and \text{Zhou, C.} (2023).
\newblock Self-evolved diverse data sampling for efficient instruction tuning.
\newblock \textit{arXiv preprint arXiv:2311.08182}.

\bibitem[{Wu et~al.(2024)Wu, Sun, Yuan, Ji, Yang and Gu}]{wu2024self}
\text{Wu, Y.}, \text{Sun, Z.}, \text{Yuan, H.}, \text{Ji, K.}, \text{Yang, Y.} and \text{Gu, Q.} (2024).
\newblock Self-play preference optimization for language model alignment.
\newblock \textit{arXiv preprint arXiv:2405.00675}.

\bibitem[{Xie et~al.(2024)Xie, Foster, Krishnamurthy, Rosset, Awadallah and Rakhlin}]{xie2024exploratory}
\text{Xie, T.}, \text{Foster, D.~J.}, \text{Krishnamurthy, A.}, \text{Rosset, C.}, \text{Awadallah, A.} and \text{Rakhlin, A.} (2024).
\newblock Exploratory preference optimization: Harnessing implicit q*-approximation for sample-efficient rlhf.
\newblock \textit{arXiv preprint arXiv:2405.21046}.

\bibitem[{Xiong et~al.(2023)Xiong, Dong, Ye, Zhong, Jiang and Zhang}]{xiong2023gibbs}
\text{Xiong, W.}, \text{Dong, H.}, \text{Ye, C.}, \text{Zhong, H.}, \text{Jiang, N.} and \text{Zhang, T.} (2023).
\newblock Gibbs sampling from human feedback: A provable kl-constrained framework for rlhf.
\newblock \textit{arXiv preprint arXiv:2312.11456}.

\bibitem[{Xu et~al.(2023)Xu, Lee, Sukhbaatar and Weston}]{xu2023some}
\text{Xu, J.}, \text{Lee, A.}, \text{Sukhbaatar, S.} and \text{Weston, J.} (2023).
\newblock Some things are more cringe than others: Preference optimization with the pairwise cringe loss.
\newblock \textit{arXiv preprint arXiv:2312.16682}.

\bibitem[{Xu et~al.(2024)Xu, Fu, Gao, Ye, Liu, Mei, Wang, Yu and Wu}]{xu2024dpo}
\text{Xu, S.}, \text{Fu, W.}, \text{Gao, J.}, \text{Ye, W.}, \text{Liu, W.}, \text{Mei, Z.}, \text{Wang, G.}, \text{Yu, C.} and \text{Wu, Y.} (2024).
\newblock Is dpo superior to ppo for llm alignment? a comprehensive study.
\newblock \textit{arXiv preprint arXiv:2404.10719}.

\bibitem[{Young et~al.(2024)Young, Chen, Li, Huang, Zhang, Zhang, Li, Zhu, Chen, Chang et~al.}]{young2024yi}
\text{Young, A.}, \text{Chen, B.}, \text{Li, C.}, \text{Huang, C.}, \text{Zhang, G.}, \text{Zhang, G.}, \text{Li, H.}, \text{Zhu, J.}, \text{Chen, J.}, \text{Chang, J.} \text{et~al.} (2024).
\newblock Yi: Open foundation models by 01. ai.
\newblock \textit{arXiv preprint arXiv:2403.04652}.

\bibitem[{Yuan et~al.(2024)Yuan, Pang, Cho, Sukhbaatar, Xu and Weston}]{yuan2024self}
\text{Yuan, W.}, \text{Pang, R.~Y.}, \text{Cho, K.}, \text{Sukhbaatar, S.}, \text{Xu, J.} and \text{Weston, J.} (2024).
\newblock Self-rewarding language models.
\newblock \textit{arXiv preprint arXiv:2401.10020}.

\bibitem[{Yuanzhe~Pang et~al.(2024)Yuanzhe~Pang, Yuan, Cho, He, Sukhbaatar and Weston}]{yuanzhe2024iterative}
\text{Yuanzhe~Pang, R.}, \text{Yuan, W.}, \text{Cho, K.}, \text{He, H.}, \text{Sukhbaatar, S.} and \text{Weston, J.} (2024).
\newblock Iterative reasoning preference optimization.
\newblock \textit{arXiv e-prints} arXiv--2404.

\bibitem[{Zellers et~al.(2019)Zellers, Holtzman, Bisk, Farhadi and Choi}]{zellers2019hellaswag}
\text{Zellers, R.}, \text{Holtzman, A.}, \text{Bisk, Y.}, \text{Farhadi, A.} and \text{Choi, Y.} (2019).
\newblock Hellaswag: Can a machine really finish your sentence?
\newblock \textit{arXiv preprint arXiv:1905.07830}.

\bibitem[{Zhang(2022)}]{zhang2022conservative}
\text{Zhang, S.} (2022).
\newblock Conservative dual policy optimization for efficient model-based reinforcement learning.
\newblock \textit{Advances in neural information processing systems}, \textbf{35} 25450--25463.

\bibitem[{Zhang(2006)}]{zhang2006varepsilon}
\text{Zhang, T.} (2006).
\newblock From $\varepsilon$-entropy to kl-entropy: Analysis of minimum information complexity density estimation.
\newblock \textit{The Annals of Statistics} 2180--2210.

\bibitem[{Zhao et~al.(2023)Zhao, Joshi, Liu, Khalman, Saleh and Liu}]{zhao2023slic}
\text{Zhao, Y.}, \text{Joshi, R.}, \text{Liu, T.}, \text{Khalman, M.}, \text{Saleh, M.} and \text{Liu, P.~J.} (2023).
\newblock Slic-hf: Sequence likelihood calibration with human feedback.
\newblock \textit{arXiv preprint arXiv:2305.10425}.

\bibitem[{Zheng et~al.(2024)Zheng, Chiang, Sheng, Zhuang, Wu, Zhuang, Lin, Li, Li, Xing et~al.}]{zheng2024judging}
\text{Zheng, L.}, \text{Chiang, W.-L.}, \text{Sheng, Y.}, \text{Zhuang, S.}, \text{Wu, Z.}, \text{Zhuang, Y.}, \text{Lin, Z.}, \text{Li, Z.}, \text{Li, D.}, \text{Xing, E.} \text{et~al.} (2024).
\newblock Judging llm-as-a-judge with mt-bench and chatbot arena.
\newblock \textit{Advances in Neural Information Processing Systems}, \textbf{36}.

\bibitem[{Zhong et~al.(2024)Zhong, Feng, Xiong, Zhao, He, Bian and Wang}]{zhong2024dpo}
\text{Zhong, H.}, \text{Feng, G.}, \text{Xiong, W.}, \text{Zhao, L.}, \text{He, D.}, \text{Bian, J.} and \text{Wang, L.} (2024).
\newblock Dpo meets ppo: Reinforced token optimization for rlhf.
\newblock \textit{arXiv preprint arXiv:2404.18922}.

\bibitem[{Zhong et~al.(2022)Zhong, Xiong, Zheng, Wang, Wang, Yang and Zhang}]{zhong2022gec}
\text{Zhong, H.}, \text{Xiong, W.}, \text{Zheng, S.}, \text{Wang, L.}, \text{Wang, Z.}, \text{Yang, Z.} and \text{Zhang, T.} (2022).
\newblock Gec: A unified framework for interactive decision making in mdp, pomdp, and beyond.
\newblock \textit{arXiv preprint arXiv:2211.01962}.

\bibitem[{Zhu et~al.(2023)Zhu, Frick, Wu, Zhu and Jiao}]{starling2023}
\text{Zhu, B.}, \text{Frick, E.}, \text{Wu, T.}, \text{Zhu, H.} and \text{Jiao, J.} (2023).
\newblock Starling-7b: Improving llm helpfulness and harmlessness with rlaif.

\end{thebibliography}

\appendix
\section{Derivations in Section \ref{sec_der}}
\label{derivation_1}
We begin by deriving \eqref{final_obj}. The solution for the inner-level optimization problem of \eqref{our_obj} is as follows:
\#\label{eq_app}
\max_\pi \mathcal{F}(\pi; r) &= \max_\pi\EE_{\substack{x\sim\mathcal{D}_t, y\sim\pi(\cdot\mid x)\\ y'\sim\pi_{\text{ref}}(\cdot\mid x)}}\Bigl[r(x, y) - r(x, y')\Bigr] - \beta\mathbb{D}_{\text{KL}}(\pi \,|| \,\pi_{\text{ref}})\notag\\
&= \EE_{x\sim\mathcal{D}_t}\Bigl[\beta\log\EE_{y\sim\pi_{\text{ref}}(\cdot\mid x)}\bigl[\exp(r(x, y) / \beta)\bigr]\Bigr] - \EE_{x\sim\mathcal{D}_t, y'\sim\pi_{\text{ref}}(\cdot\mid x)}\bigl[r(x, y')\bigr]
\#

When the reward $r$ is reparameterized by $\hat{r}_\theta (x, y) = \beta(\log\pi_\theta(y\mid x) - \log\pi_{\text{ref}}(y\mid x))$, we have that the first term in \eqref{eq_app} is $0$. The bilevel objective \eqref{our_obj} then becomes
\$
\max_r  -\mathcal{L}_{\text{lr}}(r; \mathcal{D}_t) - \alpha\EE_{x\sim\mathcal{D}, y'\sim\pi_{\text{ref}}(\cdot\mid x)}\bigl[r(x, y')\bigr].
\$

By reparameterizing the reward with the LLM, we obtain the desired results in \eqref{final_obj}.

Then we provide the derivation of \eqref{eq_grad}. We primarily consider the gradient of the newly incorporated term $\EE_{x\sim\mathcal{D}, y\sim\pi_{\text{ref}}(\cdot\mid x)}[\log\pi_\theta(y\mid x)]$. Specifically, we have
\$
\nabla_\theta\EE_{x\sim\mathcal{D}, y\sim\pi_{\text{ref}}(\cdot\mid x)}\bigl[\log\pi_\theta(y\mid x)\bigr] &= \EE_{x\sim\mathcal{D}}\Bigl[\sum_y\pi_{\text{ref}}(y\mid x)\nabla_\theta\log\pi_\theta(y\mid x)\Bigr]\\
&= \EE_{x\sim\mathcal{D}, y\sim\pi_\theta}\Bigl[\frac{\pi_{\text{ref}}(y\mid x)}{\pi_\theta(y\mid x)}\nabla_\theta\log\pi_\theta(y\mid x)\Bigr]\\
&= \EE_{x\sim\mathcal{D}, y\sim\pi_\theta}\Bigl[\exp\bigl(-\hat{r}_\theta(x. y) / \beta\bigr)\nabla_\theta\log\pi_\theta(y\mid x)\Bigr].
\$

For the derivation of the DPO gradient $\nabla_\theta\mathcal{L}_{\text{DPO}}(\pi_\theta; \mathcal{D}_t)$, we refer the readers to \cite{rafailov2024direct}.

\section{Proof of Theorem \ref{thm}}
\label{proof_indis}
\begin{proof}[Proof of Theorem \ref{thm}]
The solution to the KL-constrained reward minimization objective \eqref{eq_pi_rho} is
\$
\pi^{\min}_\rho(y\mid x) = \pi_{\rho}(y\mid x)\exp\bigl(-\hat{r}_\rho(x, y) / \beta\bigr) / Z(x),
\$
where $Z(x) = \sum_y\pi_\rho(y\mid x)\exp(-\hat{r}_\rho(x, y)/\beta) = 1$. Then we have $\pi^{\min}_\rho(y\mid x) = \pi_{\text{ref}}(y\mid x)$,
i.e., the reference policy $\pi_{\text{ref}}$ achieves the lowest implicit reward reparameterized by any $\rho$.
\end{proof}

\section{Proof of Theorem \ref{thm:regret}}
We use the reduction technique from \cite{xie2024exploratory} to connect the sample complexity of SELM to that of existing RL algorithms \citep{zhong2022gec, liu2024maximize}. We restate the proof technique from \cite{xie2024exploratory} for completeness. We emphasize that it is not a novel contribution of the present work. It is worth noting that the theoretical version of the self-exploration mechanism (Algorithm~\ref{alg_se_theory}) is a bit different from the practical one used in the numerical experiments and is closer to the proposed algorithm in \cite{xie2024exploratory}.

We present the following theoretical version of the proposed self-exploration algorithm. The key modification in Algorithm \ref{alg_se} lies in its pragmatic strategy for constructing the chosen and rejected responses. Despite this adjustment, the core principles of leveraging the self-exploration objective during online alignment remain the same.
\label{proof_thm}
\begin{algorithm}[H]
\caption{Self-Exploring Language Models (SELM; Theoretical Version)}
\begin{algorithmic}[1]\label{alg_se_theory}
\REQUIRE Reference model $\pi_{\text{ref}}$, preference dataset $\mathcal{D}_0 = \emptyset$, prompt distribution $\nu$, online iterations $T$, optimism coefficient $\alpha$, $\pi_0 = \pi_{\text{ref}}$.
\FOR{iteration $t = 1, 2, \ldots, T$}
\STATE Sample $x_t \sim \nu$, $y_t^1 \sim \pi_{t-1}(\cdot \mid x)$, $y_t^2 \sim \pi_{\mathrm{ref}}(\cdot \mid x)$.
\STATE Update the preference data $\cD_t = \cD_{t-1} \cup \{(x_t, y_t^1, y_t^2)\}$
\STATE Train the LLM $\pi_{t} = \argmax_{\pi} \{ -\mathcal{L}_{\text{DPO}}(\pi; \mathcal{D}_t) - \alpha \cdot \EE_{x\sim\nu} \EE_{y \sim \pi_{\mathrm{ref}}(\cdot \mid x)}[\log\pi(y \mid x)]\}$, let $\pi_{\text{ref}}=\pi_{t}$.
\ENDFOR 
\end{algorithmic}
\end{algorithm}

\begin{definition}[Preference-based GEC] \label{def:pgec}
    For the function class $\Pi$, we define the preference-based GEC (PGEC) as the smallest $d_{\mathrm{GPEC}}$ as
    \small
    \$
    & \sum_{t = 1}^T \EE_{(x, y, y') \sim (\nu, \pi_{\mathrm{ref}}, \pi_t)} \left[  \log \frac{\pi^*(y \given x)}{\pi_{\mathrm{ref}}(y \given x)} - \log \frac{\pi_t(y \given x)}{\pi_{\mathrm{ref}}(y \given x)} - \log \frac{\pi^*(y' \given x)}{\pi_{\mathrm{ref}}(y' \given x)} +  \log \frac{\pi_t(y' \mid x)}{\pi_{\mathrm{ref}}(y' \mid x)} \right] \notag \\
& \le \sqrt{d_{\mathrm{PGEC}} \sum_{t = 1}^T \sum_{\tau = 1}^{t - 1} \EE_{(x, y, y') \sim (\nu, \pi_{\mathrm{ref}}, \pi^\tau)} \left[  \log \frac{\pi^*(y \given x)}{\pi_{\mathrm{ref}}(y \given x)} - \log \frac{\pi^\tau(y \given x)}{\pi_{\mathrm{ref}}(y \given x)} - \log \frac{\pi^*(y' \given x)}{\pi_{\mathrm{ref}}(y' \given x)} +  \log \frac{\pi^\tau(y' \mid x)}{\pi_{\mathrm{ref}}(y' \mid x)} \right]^2 }  \\
& \qquad + 4\sqrt{ d_{\mathrm{PGEC}} T} .
    \$
\end{definition}

The definition of PGEC is a preference-based version of Generalized Eluder Coefficient (GEC) proposed by \citep{zhong2022gec}. Intuitively, both PGEC and GEC establish a crucial connection between \emph{prediction error} and \emph{in-sample estimation error}, effectively transforming regret minimization into an online estimation problem. For a comprehensive explanation and in-depth discussion, readers are directed to \citet{zhong2022gec}. A slight difference is that the PGEC here is defined with respect to the policy class, while the GEC in \citet{zhong2022gec} is defined with respect to the model or value class. These can be connected if we regard the implicit reward class $\log (\pi/\pi_{\mathrm{ref}})$ as the model or value class. As an important example, if we consider the log-linear function class $\Pi = \{\pi_\theta: \pi_\theta(y \mid x) \propto \exp( \langle \phi(x, y), \theta \rangle /\beta )\}$, we can show that $d_{\mathrm{PGEC}} = \tilde{O}(d)$ by the elliptical potential lemma \citep{abbasi2011improved,zhong2022gec}. Another remark is that here the PGEC is defined in the bandit formulation, and it can be naturally extended to the token-wise MDP formulation \citep{zhong2024dpo,rafailov2024r,xie2024exploratory} and further connects to the eluder dimension in the context of preference-based MDPs \citep{chen2022human,wang2023rlhf}. Specifically, if we regard the generation process of LLMs as token-level MDPs where the generation of each token serves as one step, the learning objective is maximizing
\$
\cJ(\pi) = \EE_{x \sim \nu, \tau \sim \pi} \left[ r(\tau) - \beta \log \frac{\pi(\tau \given x)}{\pi_{\mathrm{ref}}(\tau \given x)} \right].
\$
Here $\tau$ is the full trajectory starting from $x$. We can similarly define the PGEC (Definition~\ref{def:pgec}) for token-wise MDPs by replacing the response $y, y'$ in the bandit formulation with the trajectories $\tau, \tau'$ in the token-wise MDP formulation. We have the following informal theorem:
\begin{theorem}[Regret for MDP Formulation (informal)]
    With proper parameter choice, it holds with probability at least $1 - \delta$ that
    \$
    \mathcal{R}(T) &\lesssim \sqrt{d_{\mathrm{PGEC}} \cdot \exp(2V_{\max}) \cdot T \cdot \log (|\Pi|/\delta)},
    \$
    where $V_{\max}$ is a bounded coefficient for toekn-wise MDPs, similar to the one described in Assumption~\ref{assumption:policy:class}.
\end{theorem}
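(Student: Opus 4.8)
The plan is to prove the formal bandit statement (Theorem~\ref{thm:regret}) via the reduction of \cite{xie2024exploratory}, and then obtain the informal MDP-formulation bound as its trajectory-level lift. The unifying object is the implicit reward $\hat r_\pi(x,\tau) = \beta\log(\pi(\tau \given x)/\pi_{\mathrm{ref}}(\tau \given x))$ --- a scalar on responses in the bandit case and on full trajectories $\tau$ in the token-wise case --- whose magnitude is bounded by $V_{\max}$ (resp.\ $R_{\max}$) by Assumption~\ref{assumption:policy:class} and whose key property, the DPO reparameterization, is that $\pi$ is itself the KL-regularized optimal policy for reward $\hat r_\pi$, since $\sum_\tau \pi_{\mathrm{ref}}(\tau \given x)\exp(\hat r_\pi(x,\tau)/\beta) = 1$. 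The reduction recasts preference-based online RLHF as online RL so that the (trajectory-level) preference-based GEC of Definition~\ref{def:pgec} is the complexity measure, and the cumulative regret is controlled by combining three ingredients through a performance-difference argument.

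First, I would record an optimization inequality: since $\pi_t$ maximizes the SELM objective at the training step of Algorithm~\ref{alg_se_theory} while $\pi^* \in \Pi$ is feasible, rearranging optimality and using $\log(\pi^*/\pi_t) = \beta^{-1}(\hat r_{\pi^*} - \hat r_{\pi_t})$ yields $\mathcal{L}_{\mathrm{DPO}}(\pi_t; \mathcal{D}_t) - \mathcal{L}_{\mathrm{DPO}}(\pi^*; \mathcal{D}_t) \le (\alpha/\beta)\,\EE_{x \sim \nu,\, \tau \sim \pi_{\mathrm{ref}}}[\hat r_{\pi^*}(x,\tau) - \hat r_{\pi_t}(x,\tau)]$ --- that is, $\pi_t$ is permitted to fit the observed comparisons worse than $\pi^*$ by an amount scaled by the optimism coefficient $\alpha$, which is exactly the exploration bias exhibited by the gradient in Section~\ref{sec_der}. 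Second, a Bradley--Terry MLE concentration bound --- a Freedman-type martingale argument on the logistic log-likelihood --- gives, with probability at least $1 - \delta$, simultaneously for all $\pi \in \Pi$ and $t \le T$, that $e^{-2V_{\max}} \sum_{s=1}^{t-1} (\Delta_\pi^s)^2 - \log(|\Pi|/\delta) \lesssim \mathcal{L}_{\mathrm{DPO}}(\pi; \mathcal{D}_t) - \mathcal{L}_{\mathrm{DPO}}(\pi^*; \mathcal{D}_t)$, where $\Delta_\pi^s$ is the expected difference, over $(x, \tau, \tau') \sim (\nu, \pi_{\mathrm{ref}}, \pi_s)$, between the log-odds induced by $\hat r_\pi$ and by $\hat r_{\pi^*}$ at the pair $(\tau, \tau')$ --- the quantity squared under the root in Definition~\ref{def:pgec} --- and $e^{-2V_{\max}}$ is the curvature lower bound for $\sigma$ over the range of implicit-reward differences permitted by Assumption~\ref{assumption:policy:class}. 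Chaining these two bounds controls, for each $t$, the cumulative in-sample estimation error of $\hat r_{\pi_t}$ on past rollouts in terms of $\alpha$, $e^{2V_{\max}}$, and $\log(|\Pi|/\delta)$.

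Third, I would convert regret into out-of-sample error and invoke the GEC: because $\pi_t$ is the KL-regularized optimal policy for its own implicit reward $\hat r_{\pi_t}$, a performance-difference identity --- telescoping over tokens in the MDP case --- bounds $\mathcal{J}(\pi^*) - \mathcal{J}(\pi_t)$ by on-policy prediction errors of $\hat r_{\pi_t}$ against the true reward $r^*$ evaluated under $\pi^*$ and under $\pi_t$, and summed over $t$ these are exactly the left-hand side of the (trajectory-level) PGEC inequality. Definition~\ref{def:pgec} then bounds that sum by $\sqrt{d_{\mathrm{PGEC}} \sum_t (\text{in-sample error})} + 4\sqrt{d_{\mathrm{PGEC}} T}$; substituting the in-sample bound from the previous step, applying AM--GM, and optimizing the free parameter $\eta$ (equivalently $\alpha = 2/(\eta e^{4V_{\max}})$) to balance the increasing and decreasing contributions yields $\mathcal{R}(T) \lesssim \sqrt{d_{\mathrm{PGEC}}\, e^{2V_{\max}}\, T\, \log(|\Pi|/\delta)}$. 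For the log-linear class $\Pi = \{\pi_\theta : \pi_\theta(y \given x) \propto \exp(\la \phi(x,y), \theta \ra/\beta)\}$ the summands become linear functionals $\beta^{-1}\la \phi(x,y) - \phi(x,y'), \theta - \theta^* \ra$, so the PGEC collapses to the classical elliptical-potential estimate \citep{abbasi2011improved, zhong2022gec}, giving $d_{\mathrm{PGEC}} = \tilde{O}(d)$; for structured preference-based MDPs it reduces instead to the eluder dimension \citep{chen2022human, wang2023rlhf}.

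The main obstacle is the concentration step together with the bookkeeping of the exponential-in-$V_{\max}$ factors: passing from suboptimality in the logistic log-loss to a squared error on implicit-reward \emph{differences} costs a factor $e^{2V_{\max}}$ because $\sigma$ is exponentially flat in its tails, and one must check that after optimizing over $\eta$ this enters the final regret only as $e^{V_{\max}}$, consistently with the $e^{4V_{\max}}$ prescribed for $\alpha$. One must also carry the KL/partition-function terms and the non-identifiability of absolute rewards through all three ingredients --- every quantity has to be expressed via $\log(\pi/\pi_{\mathrm{ref}})$ rather than $r$ itself --- and, in the MDP case, verify that the token-wise telescoping is compatible with the trajectory-level bound $|\hat r_\pi| \le V_{\max}$. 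Since the reduction machinery is imported essentially verbatim from \cite{xie2024exploratory} and the GEC estimates from \cite{zhong2022gec, liu2024maximize}, no genuinely new inequality is required; the work is to confirm that the SELM objective --- optimism implemented through the term $-\alpha\,\EE_{\tau \sim \pi_{\mathrm{ref}}}[\log \pi(\tau \given x)]$ --- slots into this pipeline.
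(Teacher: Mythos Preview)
Your proposal is correct and follows essentially the same route as the paper: the three ingredients you identify --- the optimality inequality from the SELM objective (the paper's \eqref{eq:222}), the Bradley--Terry MLE concentration (the paper's Lemma~\ref{lemma:concentration}), and the PGEC bound combined with AM--GM and optimization over $\eta$ --- are exactly the pieces the paper assembles, and your description of the MDP case as a trajectory-level lift of the bandit argument matches the paper's own remark that one simply replaces responses $y,y'$ by trajectories $\tau,\tau'$ in Definition~\ref{def:pgec}. The only cosmetic difference is ordering (the paper opens with the regret decomposition and closed-form identity \eqref{eq:closed:form} before invoking the other two ingredients, whereas you present the optimization and concentration steps first), and your concentration factor $e^{-2V_{\max}}$ differs from the paper's $e^{-4R_{\max}}$, but since you already flag the exponential bookkeeping as the main thing to verify, this is immaterial to the informal statement.
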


\subsection{Proof of Theorem \ref{thm:regret}}

\begin{proof}[Proof of Theorem \ref{thm:regret}]
We first decompose the regret as
{\small
\$
\mathcal{R}(T) & = \sum_{t = 1}^T [\cJ(\pi^*) - \cJ(\pi_t)] \\
& = \sum_{t = 1}^T \left( \EE_{x \sim \nu, y \sim \pi^*(\cdot \mid x)}\left[r(x, y) - \beta \log \frac{\pi^*(y \mid x)}{\pi_{\mathrm{ref}}(y \mid x)}\right] - \EE_{x \sim \nu, y \sim \pi_t(\cdot \mid x)}\left[r(x, y) - \beta \log \frac{\pi_t(y \mid x)}{\pi_{\mathrm{ref}}(y \mid x)}\right] \right)\\
& = \sum_{t = 1}^T \left( \EE_{x \sim \nu, y \sim \pi_{\mathrm{ref}}(\cdot \mid x)}\left[r(x, y) - \beta \log \frac{\pi^*(y \mid x)}{\pi_{\mathrm{ref}}(y \mid x)}\right] - \EE_{x \sim \nu, y \sim \pi_t(\cdot \mid x)}\left[r(x, y) - \beta \log \frac{\pi_t(y \mid x)}{\pi_{\mathrm{ref}}(y \mid x)}\right] \right),
\$}
where the last line uses the fact that 
\# \label{eq:closed:form}
r(x, y) - \beta \log \frac{\pi^*(y \mid x)}{\pi_{\mathrm{ref}}(y \mid x)} = \beta \cdot \log Z_r(x), 
\#
which is independent of the response $y$. Rearranging the above regret decomposition, we have
{\small
\# \label{eq:220}
\mathcal{R}(T) & = \sum_{t = 1}^T \left( \EE_{x \sim \nu, y \sim \pi_{\mathrm{ref}}(\cdot \mid x)}\left[r(x, y) - \beta \log \frac{\pi^*(y \mid x)}{\pi_{\mathrm{ref}}(y \mid x)}\right] - \EE_{x \sim \nu, y \sim \pi_t(\cdot \mid x)}\left[r(x, y) - \beta \log \frac{\pi_t(y \mid x)}{\pi_{\mathrm{ref}}(y \mid x)}\right] \right) \notag \\
& = \sum_{t = 1}^T \EE_{x \sim \nu, y \sim \pi_{\mathrm{ref}}(\cdot \mid x)} \left[ \beta \log \frac{\pi_t(y \mid x)}{\pi^*(y \mid x)} \right]    \notag \\
& \qquad + \sum_{t = 1}^T \EE_{x \sim \nu, y \sim \pi_{\mathrm{ref}}(\cdot \mid x), y' \sim \pi_t(\cdot \mid x)} \left[ r(x, y) - \beta \log \frac{\pi_t(y \given x)}{\pi_{\mathrm{ref}}(y \given x)} - r(x, y') + \beta \log \frac{\pi_t(y' \mid x)}{\pi_{\mathrm{ref}}(y' \mid x)} \right] \notag \\
& = \sum_{t = 1}^T \EE_{x \sim \nu, y \sim \pi_{\mathrm{ref}}(\cdot \mid x)} \left[ \beta \log \frac{\pi_t(y \mid x)}{\pi^*(y \mid x)} \right]    \notag \\
& \qquad + \beta \sum_{t = 1}^T \EE_{(x, y, y') \sim (\nu, \pi_{\mathrm{ref}}, \pi_t)} \left[  \log \frac{\pi^*(y \given x)}{\pi_{\mathrm{ref}}(y \given x)} - \log \frac{\pi_t(y \given x)}{\pi_{\mathrm{ref}}(y \given x)} - \log \frac{\pi^*(y' \given x)}{\pi_{\mathrm{ref}}(y' \given x)} +  \log \frac{\pi_t(y' \mid x)}{\pi_{\mathrm{ref}}(y' \mid x)} \right], 
\#}
where the last line uses \eqref{eq:closed:form}.
By the definition of PGEC in Definition~\ref{def:pgec}, we have
{
\small
\# \label{eq:221}
& \sum_{t = 1}^T \EE_{(x, y, y') \sim (\nu, \pi_{\mathrm{ref}}, \pi_t)} \left[  \log \frac{\pi^*(y \given x)}{\pi_{\mathrm{ref}}(y \given x)} - \log \frac{\pi_t(y \given x)}{\pi_{\mathrm{ref}}(y \given x)} - \log \frac{\pi^*(y' \given x)}{\pi_{\mathrm{ref}}(y' \given x)} +  \log \frac{\pi_t(y' \mid x)}{\pi_{\mathrm{ref}}(y' \mid x)} \right] \notag \\
& \le \sqrt{d_{\mathrm{PGEC}} \sum_{t = 1}^T \sum_{\tau = 1}^{t - 1} \EE_{(x, y, y') \sim (\nu, \pi_{\mathrm{ref}}, \pi^\tau)} \left[  \log \frac{\pi^*(y \given x)}{\pi_{\mathrm{ref}}(y \given x)} - \log \frac{\pi^\tau(y \given x)}{\pi_{\mathrm{ref}}(y \given x)} - \log \frac{\pi^*(y' \given x)}{\pi_{\mathrm{ref}}(y' \given x)} +  \log \frac{\pi^\tau(y' \mid x)}{\pi_{\mathrm{ref}}(y' \mid x)} \right]^2 } \notag \\ 
& \qquad + 4 \sqrt{d_{\mathrm{PGEC}} T} \notag \\
& \le \frac{d_{\mathrm{PGEC}}}{4\eta} + \eta \sum_{t = 1}^T \sum_{\tau = 1}^{t - 1} \EE_{(x, y, y') \sim (\nu, \pi_{\mathrm{ref}}, \pi^\tau)} \left[  \log \frac{\pi^*(y \given x)}{\pi_{\mathrm{ref}}(y \given x)} - \log \frac{\pi^\tau(y \given x)}{\pi_{\mathrm{ref}}(y \given x)} - \log \frac{\pi^*(y' \given x)}{\pi_{\mathrm{ref}}(y' \given x)} +  \log \frac{\pi^\tau(y' \mid x)}{\pi_{\mathrm{ref}}(y' \mid x)} \right]^2, \notag \\
& \qquad + 4 \sqrt{d_{\mathrm{PGEC}} T},
\#}
where the last inequality follows from the fact that $\sqrt{xy} \le x/(4\eta) + \eta y$ for any $x, y, \eta > 0$.

By the updating rule of $\pi_{t + 1} = \argmax_{\pi} \{ -\mathcal{L}_{\text{DPO}}(\pi; \mathcal{D}_t) - \alpha \cdot \EE_{x\sim\nu} \EE_{y \sim \pi_{\mathrm{ref}}(\cdot \mid x)}[\log\pi(y \mid x)]\}$, we have
\$
 & -\mathcal{L}_{\text{DPO}}(\pi_{t}; \mathcal{D}_{t-1}) - \alpha \cdot \EE_{x\sim\nu, y \sim \pi_{\mathrm{ref}}(\cdot \mid x)} [\log\pi_{t}(y \mid x)] \\
 & \qquad \ge  -\mathcal{L}_{\text{DPO}}(\pi^*; \mathcal{D}_{t-1}) - \alpha \cdot \EE_{x\sim\nu, y \sim \pi_{\mathrm{ref}}(\cdot \mid x)}[\log\pi^*(y \mid x)],
\$
which equivalents to that
\# \label{eq:222}
\EE_{x\sim\nu, y \sim \pi_{\mathrm{ref}}(\cdot \mid x)} \left[ \beta \log \frac{\pi_t(y \mid x)}{\pi^*(y \mid x)} \right] \le \frac{\beta}{\alpha} \cdot \left( \cL_{\mathrm{DPO}}(\pi^*; \cD_{t-1}) - \cL_{\mathrm{DPO}}(\pi_t; \cD_{t-1}) \right). 
\#
We upper bound the right handsise of \eqref{eq:222} via the following lemma.
\begin{lemma}[Concentration] \label{lemma:concentration}
For any $t \in [T]$ and $0 < \delta < 1$, it holds with probability $1- \delta$ that
{\small
\$
& \cL_{\mathrm{DPO}}(\pi^*; \cD_{t-1}) - \cL_{\mathrm{DPO}}(\pi_t; \cD_{t-1}) \\
& \lesssim -\frac{2}{\exp(4R_{\max})} \cdot \sum_{\tau = 1}^{t - 1} \EE_{(x, y, y') \sim (\nu, \pi_{\mathrm{ref}}, \pi^\tau)} \left[  \log \frac{\pi^*(y \given x)}{\pi_{\mathrm{ref}}(y \given x)} - \log \frac{\pi^\tau(y \given x)}{\pi_{\mathrm{ref}}(y \given x)} - \log \frac{\pi^*(y' \given x)}{\pi_{\mathrm{ref}}(y' \given x)} +  \log \frac{\pi^\tau(y' \mid x)}{\pi_{\mathrm{ref}}(y' \mid x)} \right]^2 \\
&  \qquad + \log(|\Pi|/\delta).
\$}
\end{lemma}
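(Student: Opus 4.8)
The plan is to prove Lemma~\ref{lemma:concentration} by a standard martingale concentration argument tailored to the Bradley--Terry log-likelihood. Writing $\ell_\tau(\pi) = -\log\sigma\bigl(\beta\log\frac{\pi(y^1_\tau\mid x_\tau)}{\pi_{\mathrm{ref}}(y^1_\tau\mid x_\tau)} - \beta\log\frac{\pi(y^2_\tau\mid x_\tau)}{\pi_{\mathrm{ref}}(y^2_\tau\mid x_\tau)}\bigr)$ for the single-sample DPO loss on datum $(x_\tau, y^1_\tau, y^2_\tau)$, we have $\cL_{\mathrm{DPO}}(\pi;\cD_{t-1}) = \frac{1}{t-1}\sum_{\tau=1}^{t-1}\ell_\tau(\pi)$ (or the unnormalized sum, depending on convention; I would track constants accordingly). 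First I would observe that since the data are generated by the Bradley--Terry model with ground-truth reward $r$, and since $\pi^*$ is the policy whose implicit reward $\beta\log(\pi^*/\pi_{\mathrm{ref}})$ matches $r$ up to the $y$-independent shift $\beta\log Z_r(x)$ (this is exactly \eqref{eq:closed:form}), the conditional expectation $\EE[\ell_\tau(\pi^*) - \ell_\tau(\pi)\mid \cF_{\tau-1}]$ equals the negative KL / squared-Hellinger-type divergence between the BT preference distribution induced by $\pi^*$ and that induced by $\pi$, evaluated at the data-generating distribution $(\nu,\pi_{\mathrm{ref}},\pi^\tau)$.

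The key steps, in order: (i) form the exponential supermartingale $M_t(\pi) = \exp\bigl(\tfrac12\sum_{\tau\le t-1}\bigl(\ell_\tau(\pi^*) - \ell_\tau(\pi)\bigr) - \tfrac12\sum_{\tau\le t-1}\log\EE[e^{-(\ell_\tau(\pi)-\ell_\tau(\pi^*))}\mid\cF_{\tau-1}]\bigr)$ and apply a union bound over the finite class $\Pi$ to get, with probability $1-\delta$, that for all $\pi\in\Pi$ and all $t$,
\$
\sum_{\tau=1}^{t-1}\bigl(\ell_\tau(\pi^*)-\ell_\tau(\pi)\bigr) \le -2\sum_{\tau=1}^{t-1}\log\EE\bigl[e^{-\frac12(\ell_\tau(\pi)-\ell_\tau(\pi^*))}\,\big|\,\cF_{\tau-1}\bigr] + 2\log(|\Pi|/\delta);
\$
(ii) lower-bound $-\log\EE[e^{-\frac12(\ell_\tau(\pi)-\ell_\tau(\pi^*))}\mid\cF_{\tau-1}]$ by a squared Hellinger distance $H^2$ between the two BT Bernoulli laws, using the elementary inequality $-\log\EE[\sqrt{q/p}] \ge \tfrac12 H^2(p,q)$ for the likelihood-ratio of a single Bernoulli observation; (iii) relate this Hellinger distance to the squared difference of implicit-reward gaps appearing in the PGEC display. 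Concretely, the BT probabilities are $\sigma(z^*)$ and $\sigma(z)$ where $z = \beta\log\frac{\pi(y^1\mid x)}{\pi_{\mathrm{ref}}(y^1\mid x)} - \beta\log\frac{\pi(y^2\mid x)}{\pi_{\mathrm{ref}}(y^2\mid x)}$ and similarly $z^*$; under Assumption~\ref{assumption:policy:class} we have $|z|,|z^*| \le 2R_{\max}$, so $\sigma'$ is bounded below by $c\exp(-2R_{\max})$ on the relevant interval, giving $H^2(\sigma(z^*),\sigma(z)) \gtrsim \exp(-4R_{\max})\,(z-z^*)^2$; then take the conditional expectation over $(x,y^1,y^2)\sim(\nu,\pi_{\mathrm{ref}},\pi^\tau)$ and instantiate $\pi = \pi_t$ to recover precisely the quantity $\EE_{(x,y,y')\sim(\nu,\pi_{\mathrm{ref}},\pi^\tau)}\bigl[\log\frac{\pi^*(y\mid x)}{\pi_{\mathrm{ref}}(y\mid x)} - \log\frac{\pi^\tau(y\mid x)}{\pi_{\mathrm{ref}}(y\mid x)} - \log\frac{\pi^*(y'\mid x)}{\pi_{\mathrm{ref}}(y'\mid x)} + \log\frac{\pi^\tau(y'\mid x)}{\pi_{\mathrm{ref}}(y'\mid x)}\bigr]^2$ up to the $\beta^2$ and $\exp(4R_{\max})$ factors. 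Combining (i)--(iii) and using $\cL_{\mathrm{DPO}}(\pi^*;\cD_{t-1}) - \cL_{\mathrm{DPO}}(\pi_t;\cD_{t-1}) = \frac{1}{t-1}\sum_\tau(\ell_\tau(\pi^*)-\ell_\tau(\pi_t))$ yields the claimed bound. To finish Theorem~\ref{thm:regret}, I would substitute Lemma~\ref{lemma:concentration} into \eqref{eq:222}, plug that into \eqref{eq:220}, and combine with \eqref{eq:221}: choosing $\alpha = 2/(\eta\exp(4R_{\max}))$ makes the negative in-sample-error term from \eqref{eq:222} cancel the positive $\eta\sum\sum(\cdots)^2$ term from \eqref{eq:221}, leaving $\mathcal{R}(T) \lesssim \beta\bigl(d_{\mathrm{PGEC}}/(4\eta) + 4\sqrt{d_{\mathrm{PGEC}}T} + \frac{\exp(4R_{\max})\eta}{2}T\log(|\Pi|/\delta)\bigr)$; optimizing over $\eta$ (which gives the stated $\eta = \sqrt{Td_{\mathrm{PGEC}}/(\exp(4R_{\max})\log(|\Pi|/\delta))}$ up to constants) produces $\mathcal{R}(T) \lesssim \sqrt{d_{\mathrm{PGEC}}\exp(2R_{\max})T\log(|\Pi|/\delta)}$ after absorbing the lower-order $\sqrt{d_{\mathrm{PGEC}}T}$ term.

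The main obstacle I anticipate is step (iii): carefully converting the Hellinger distance between BT Bernoulli distributions into the exact quadratic form used in the PGEC definition, while correctly tracking the $\exp(R_{\max})$ dependence. There is a subtlety in that the PGEC display uses $\pi_t$ in the expectation index on the left-hand side but the in-sample errors use historical $\pi^\tau$ with $\tau < t$, and Jensen's inequality must be applied in the right direction to pass from "expectation of squared gap" to "squared expectation of gap" (the PGEC is stated so that the in-sample side already carries the square inside the expectation, so this should align, but the bookkeeping needs care). A second, more routine obstacle is handling normalization of the empirical DPO loss (sum versus average over $t-1$ samples) consistently between the concentration lemma and its use in \eqref{eq:222}; I would fix the convention that $\cD_{t-1}$ contributes an unnormalized sum, matching the $\log(|\Pi|/\delta)$ scaling in the lemma statement, and verify this is consistent with the final $\sqrt{T}$ regret.
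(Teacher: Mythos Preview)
Your proposal is correct and follows precisely the standard MLE concentration analysis that the paper invokes: the paper does not actually prove Lemma~\ref{lemma:concentration} but defers to \cite{zhang2006varepsilon,agarwal2020flambe,liu2024maximize,xie2024exploratory}, and your exponential-supermartingale $\to$ Hellinger $\to$ squared-gap argument under the $R_{\max}$ bound is exactly that route. One minor point you flag correctly: the squared in-sample term should carry $\pi_t$ (not $\pi^\tau$) inside the bracket for your step~(iii) to produce it---this appears to be a typo propagated through both the lemma statement and the PGEC definition, and your derivation yields the right object.
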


\begin{proof}
    The proof of this lemma follows the standard MLE analysis \citep{zhang2006varepsilon} and its application for standard reward-based RL \citep{agarwal2020flambe,liu2024maximize}. Recent works \citep{liu2024provably,xie2024exploratory,cen2024value} also applies this result for RLHF. For brevity, we omit the detailed proof here and direct readers to these related works for the proof.
\end{proof}

Combining \eqref{eq:220}, \eqref{eq:221}, \eqref{eq:222}, and Lemma~\ref{lemma:concentration}, together with the parameter choice $\alpha = 2/(\eta \exp(4R_{\max}))$, we obtain
\$
\cR(T) &\lesssim \frac{\beta T d_{\mathrm{PGEC}}}{\eta} + \beta \eta \cdot {\exp(4R_{\max})} \log (|\Pi|/\delta) + 4 \sqrt{d_{\mathrm{PGEC}} T} \\
& \lesssim \sqrt{d_{\mathrm{PGEC}} \cdot \exp(2R_{\max}) \cdot T \cdot \log (|\Pi|/\delta)},
\$
where the last line follows from the fact that $\eta = \sqrt{T d_{\mathrm{PGEC}}/(\exp(4R_{\max}) \log(|\Pi|/\delta)) }$. Therefore, we finish the proof of Theorem~\ref{thm:regret}.
\end{proof}

\section{Experiment Setup}
\label{app_exp}
In experiments, we use the Alignment Handbook \citep{alignment_handbook2023} framework as our codebase. We find the best hyperparameter settings for the strong iterative DPO baseline by conducting a grid search over the iteration number, batch size, learning rate, and label update rule. The results for the Zephyr-based models are shown in Figure \ref{fig_grid}. Specifically, we find that using the same amount of data, updating the model too many iterations can lead to instability. So we set the iteration number to $3$ for Llama3-It-based and Zephyr-based models (excluding the first iteration of DPO training). Besides, we observe that choosing different batch sizes has a large effect on the models' performance and the optimal batch size heavily depends on the model architecture. In experiments, we set the batch size to $256$ and $128$ for the Zephyr-based and Llama3-It-based models, respectively. For the learning rate, we consider three design choices: cyclic learning rate with constant cycle amplitude, linearly decayed cycle amplitude, and decayed cycle amplitude at the last iteration. We find that a decaying cycle amplitude performs better than constant amplitudes in general. Thus, for Zephyr-based models, we set the learning to $5e-7$ for the first three iterations and $1e-7$ for the last iteration. In each iteration, the warmup ratio is $0.1$. For Llama3-It-based models, we use a linearly decayed learning rate from $5e-7$ to $1e-7$ within $3$ iterations with the same warmup ratio. We also test two update ways for the preference data. One is to rank $y_w, y_l, y_{\text{ref}}$ and keep the best and worst responses in the updated dataset, which is the setting that is described in the main paper. The other is to compare $y_w$ and $y_{\text{ref}}$ and replace the chosen or rejected response by $y_{\text{ref}}$ based on the comparison result. We find that the former design performs better than the latter. We also compared with stepwise DPO \citep{kim2024sdpo}, which updates the reference model at each iteration but uses the original dataset instead of the updated one. This demonstrates that exploring and collecting new data is necessary.

\begin{figure}[H]
    \centering
    \begin{minipage}[b]{0.32\linewidth}
        \centering
        \includegraphics[width=\linewidth]{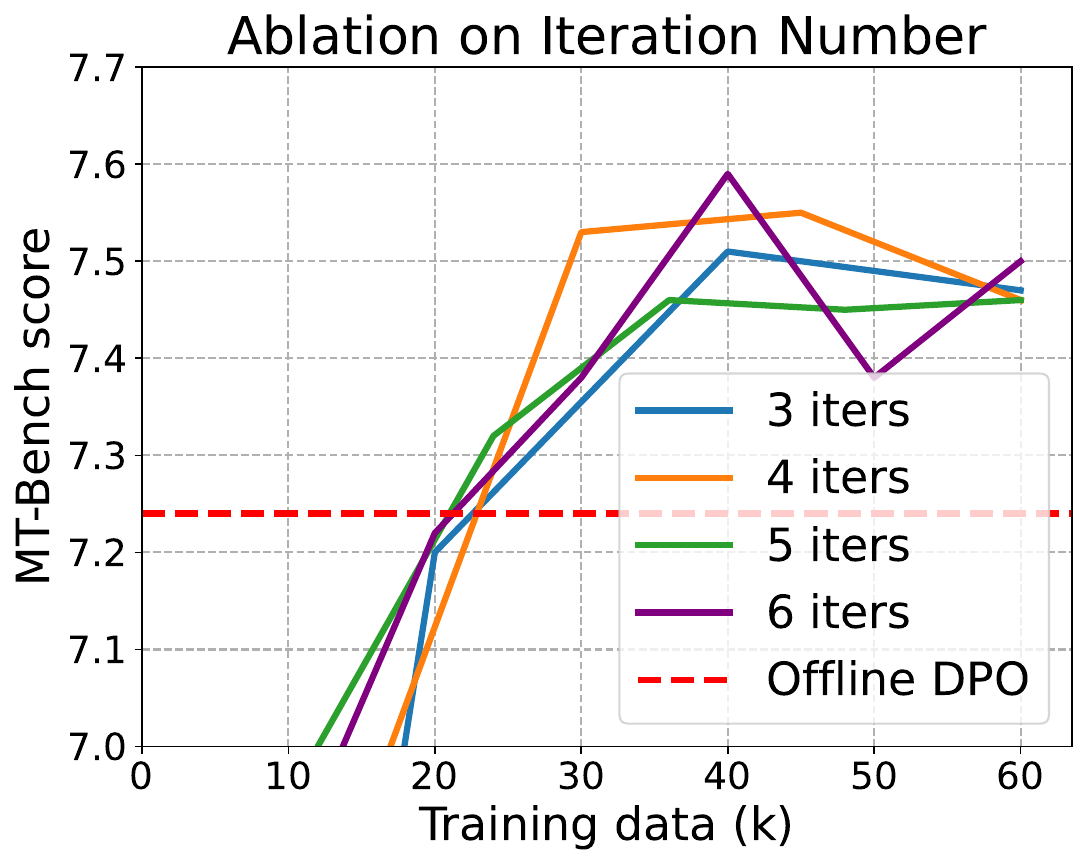}
    \end{minipage}
    \begin{minipage}[b]{0.32\linewidth}
        \centering
        \includegraphics[width=\linewidth]{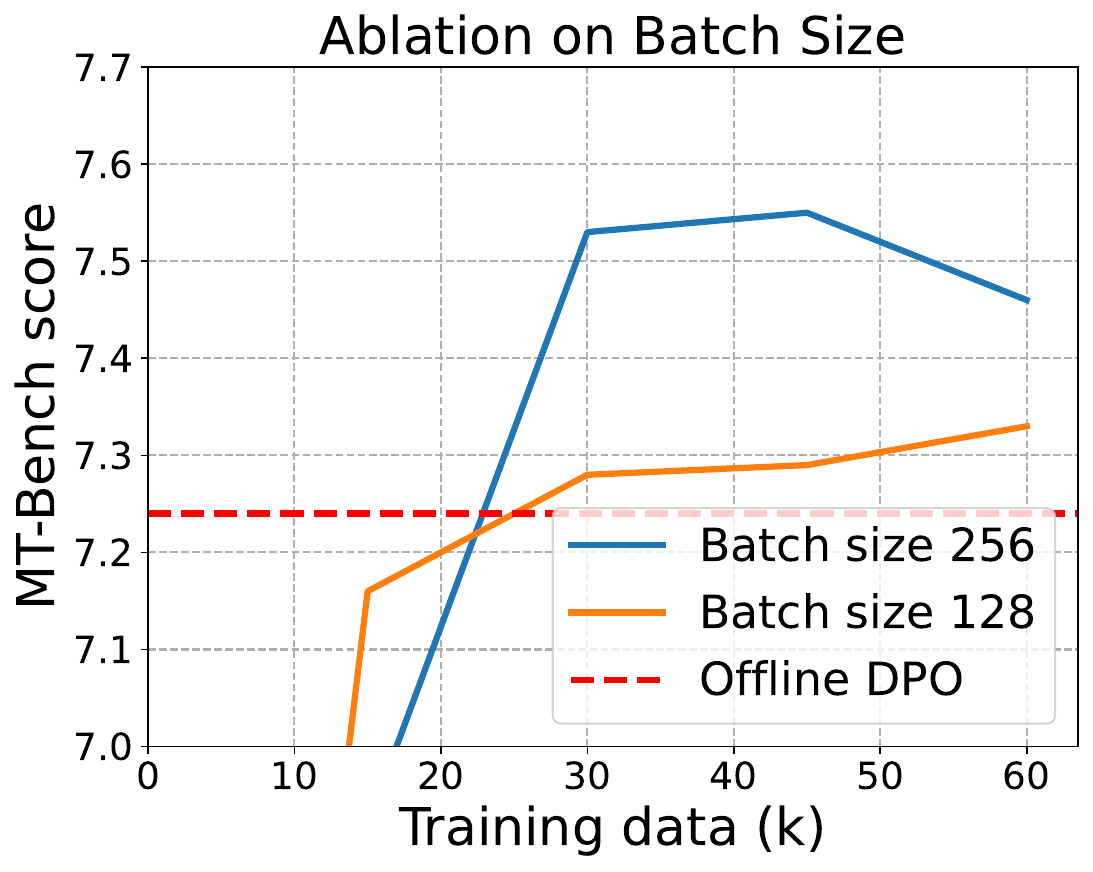}
    \end{minipage}
        \begin{minipage}[b]{0.32\linewidth}
        \centering
        \includegraphics[width=\linewidth]{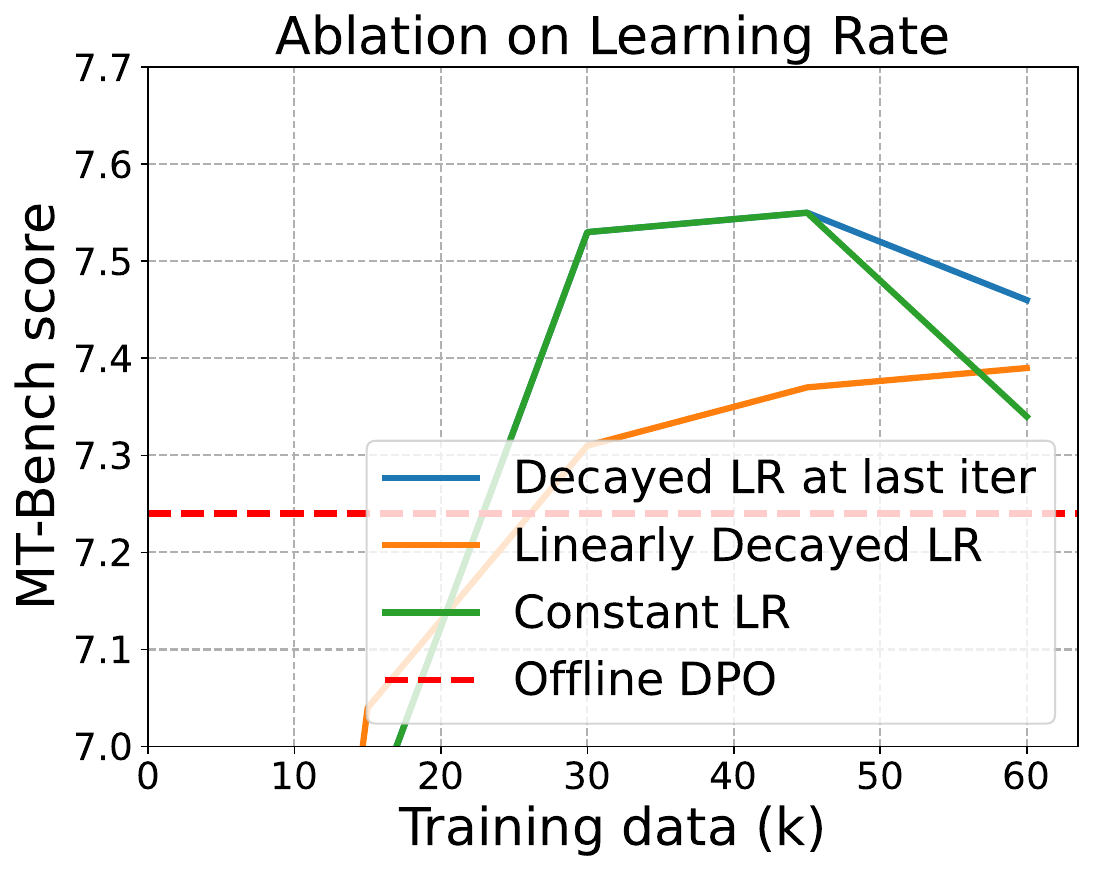}
    \end{minipage}
    \begin{minipage}[b]{0.32\linewidth}
        \centering
        \includegraphics[width=\linewidth]{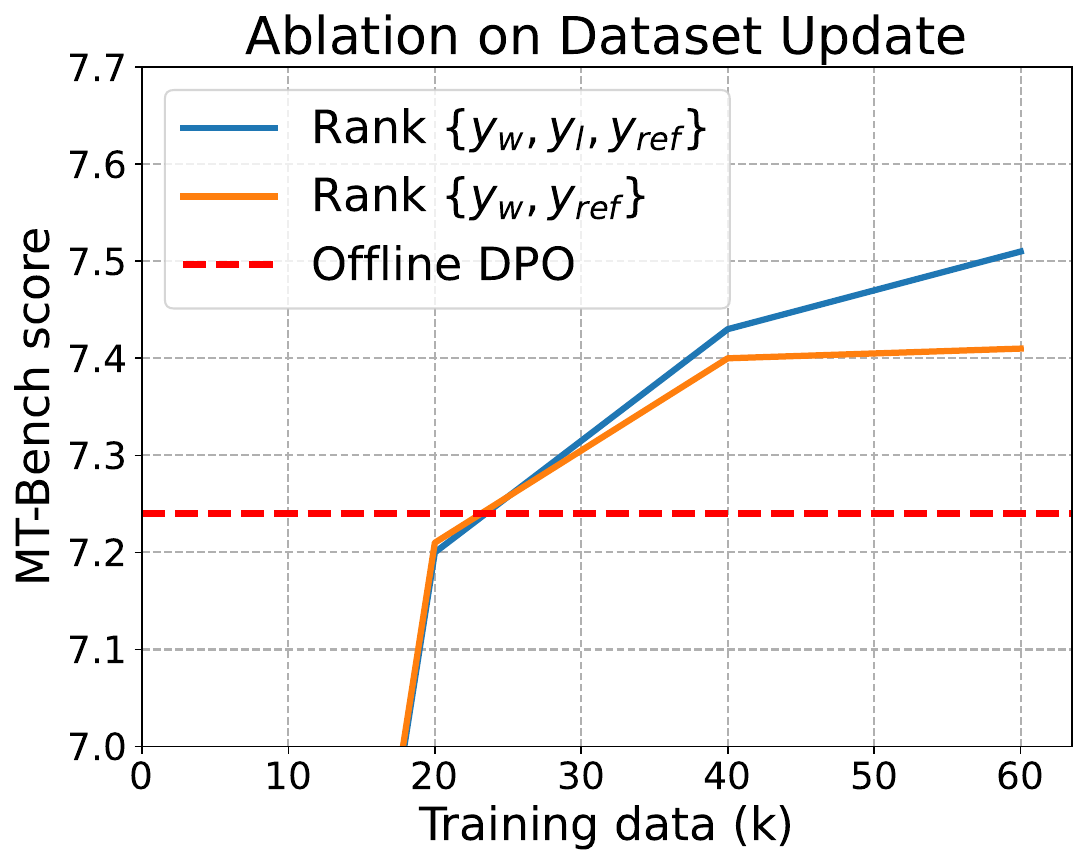}
    \end{minipage}
    \begin{minipage}[b]{0.32\linewidth}
    \centering
    \includegraphics[width=\linewidth]{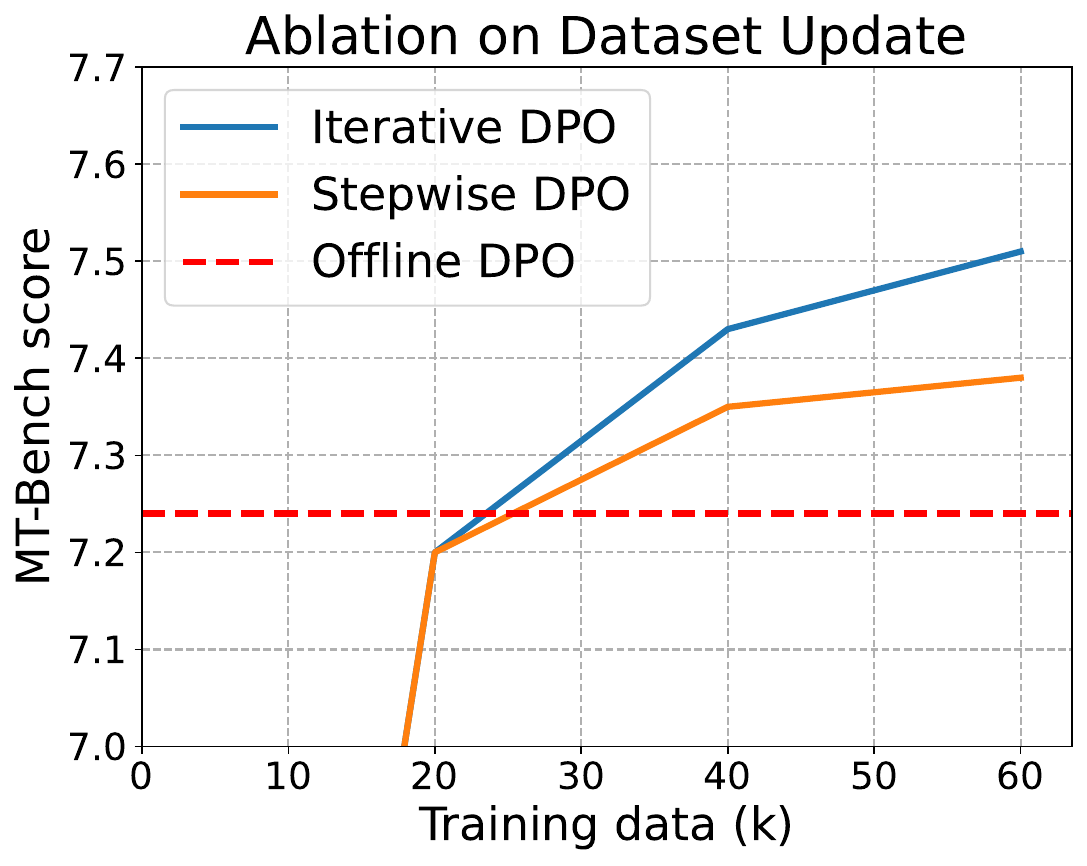}
    \end{minipage}
\caption{Ablation of the iterative DPO baseline. We conduct a grid search over the iteration number, batch size, learning rate, and designs of the dataset update rule.}
\label{fig_grid}
\end{figure}

For the proposed SELM method, we follow the above hyperparameter settings for a fair comparison. The optimism coefficient $\alpha$ is searched over $0.005$, $0.001$, $0.0005$, and $0.0001$ and is selected based on the average external reward on the holdout test set of UltraFeedback. We set $\alpha = 0.001$ for Zephyr-based SELM and $\alpha=0.0001$ for Llama3-It-based SELM. For training SELM based on other models, we recommend setting $\alpha = 0.005$ or $0.001$ as it shows minimal sensitivity to variations.

\end{document}